\newcommand{\xmark}{\ding{55}}
\newtheorem{theorem}{Theorem}
\newtheorem{lemma}{Lemma}
\title{ILIF: Temporal Inhibitory Leaky Integrate-and-Fire Neuron for Overactivation in Spiking Neural Networks}
\author{
Kai Sun$^1$
\and
Peibo Duan$^{1*}$\and
Levin Kuhlmann$^{1}$\and
Beilun Wang$^{2}$ \And
Bin Zhang$^{3}$
\affiliations
$^1$Department of Data Science and AI, Monash University, Melbourne, Australia\\
$^2$School of Computer Science and Engineering, Southeast University, China\\
$^3$School of Software, Northeastern University, China\\
\emails
\{kai.sun1, peibo.duan, levin.kuhlmann\}@monash.edu,
beilun@seu.edu.cn, zhangbin@mail.neu.edu.cn
}
\begin{document}

\maketitle
\begin{abstract}
The Spiking Neural Network (SNN) has drawn increasing attention for its energy-efficient, event-driven processing and biological plausibility. To train SNNs via backpropagation, surrogate gradients are used to approximate the non-differentiable spike function, but they only maintain nonzero derivatives within a narrow range of membrane potentials near the firing threshold—referred to as the surrogate gradient support width $\gamma$. We identify a major challenge, termed \textbf{the dilemma of $\gamma$}: a relatively large $\gamma$ leads to overactivation, characterized by excessive neuron firing, which in turn increases energy consumption, whereas a small $\gamma$ causes vanishing gradients and weakens temporal dependencies. To address this, we propose a temporal Inhibitory Leaky Integrate-and-Fire (ILIF) neuron model, inspired by biological inhibitory mechanisms. This model incorporates interconnected inhibitory units for membrane potential and current, effectively mitigating overactivation while preserving gradient propagation. Theoretical analysis demonstrates ILIF’s effectiveness in overcoming the $\gamma$ dilemma, and extensive experiments on multiple datasets show that ILIF improves energy efficiency by reducing firing rates, stabilizes training, and enhances accuracy. The code is available at \url{github.com/kaisun1/ILIF}.


\end{abstract}

\section{Introduction}

The Spiking Neural Network (SNN), recognized as the third generation of neural networks, is distinguished by the simulation of neuronal message passing through spike-based activations \cite{maass1997networks}. Unlike the traditional Artificial Neural Network (ANN), which process information with continuous values, SNN, driven by the dynamic accumulation of membrane potential to trigger discrete activation events, effectively mimics the behavior of biological neurons by propagating discrete spike signals (0 or 1) between neurons \cite{tavanaei2019deep}. This spike-based processing endows SNN with remarkable energy efficiency, especially when deployed on neuromorphic hardware platforms such as Intel's Loihi, IBM's TrueNorth, and Tianjic chips \cite{cai2021neuromorphic}.

A key challenge in gradient-based optimization for training SNNs is its inherent non-differentiability due to the discrete nature of spike transmissions between neurons. To resolve this issue, surrogate gradient (SG) has been introduced to enable backpropagation in SNNs  \cite{neftci2019surrogate}. However, these approximations can cause or worsen two major issues: overactivation and gradient vanishing. Specifically, overactivation occurs when the accumulated membrane potential surpasses twice the threshold, causing excessive spiking that drives up energy consumption and masks essential temporal information. Strategies such as adaptive thresholding, residual membrane potential modulation, and normalization have been proposed to regulate neuronal activity and mitigate overactivation \cite{wei2023temporal,rw:stc-lif,jiangtab}. Gradient vanishing, on the other hand, stems from the mismatch between surrogate gradients and discrete spikes, as well as membrane potential decay, resulting in ineffective backpropagation. Current solutions, such as residual learning and adaptive mechanisms \cite{SEW-ResNet,rw:glif}, enhance gradient propagation and improve training efficiency.

Existing research often treats overactivation and gradient vanishing as separate issues, overlooking the conflicting effects of the SG on both phenomena. As shown in Figure~\ref{fig:subfig1}, given the support width \(\gamma\) in the SG, Figure~\ref{fig:subfig2} demonstrates that a larger \(\gamma\) results in elevated firing rates and reduced accuracy (more details will be presented in Section \ref{subsec: dilemma of gamma}). However, as \cite{huang2024clif} points out, when the threshold remains constant, a smaller \(\gamma\) risks gradient vanishing. In this paper, this contradiction is referred to as \textbf{the dilemma of \(\gamma\)}, which highlights the need to balance the mitigation of overactivation with the preservation of gradient flow. Inspired by the brain’s efficient spike regulation through feedforward and feedback inhibition, which controls excessive activation, our objective is to design a module that mitigates the $\gamma$-induced conflict between overactivation and gradient vanishing.

\begin{figure}[h]
    \centering
    \begin{subfigure}[b]{0.35\textwidth}
        \centering
        \includegraphics[width=\textwidth]{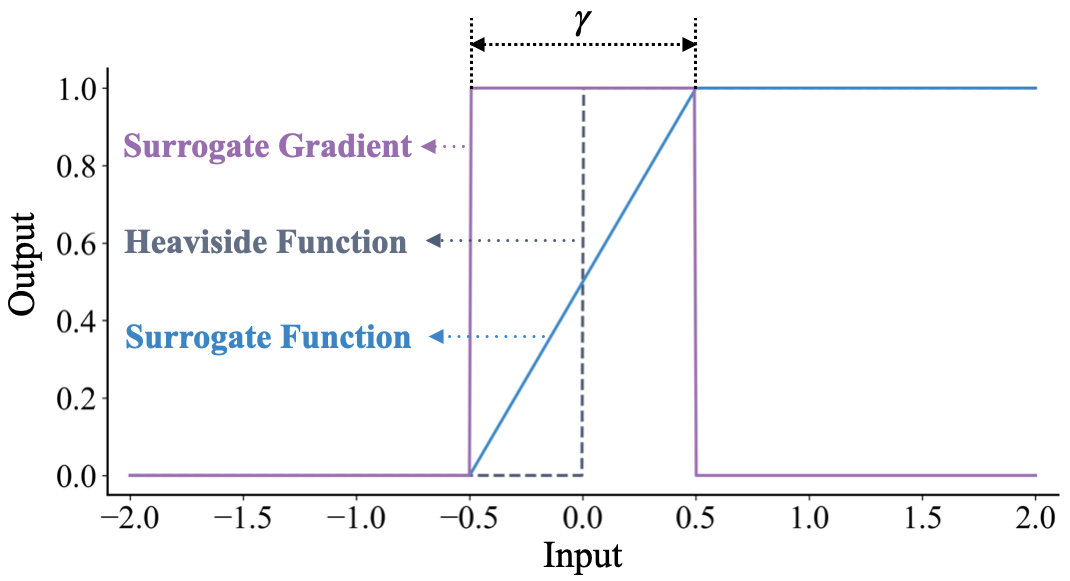}
        \caption{Surrogate gradient}
        \label{fig:subfig1}
    \end{subfigure}
    
    \vspace{0.1cm}
    
    \begin{subfigure}[b]{0.23\textwidth}
        \centering
        \includegraphics[width=\textwidth]{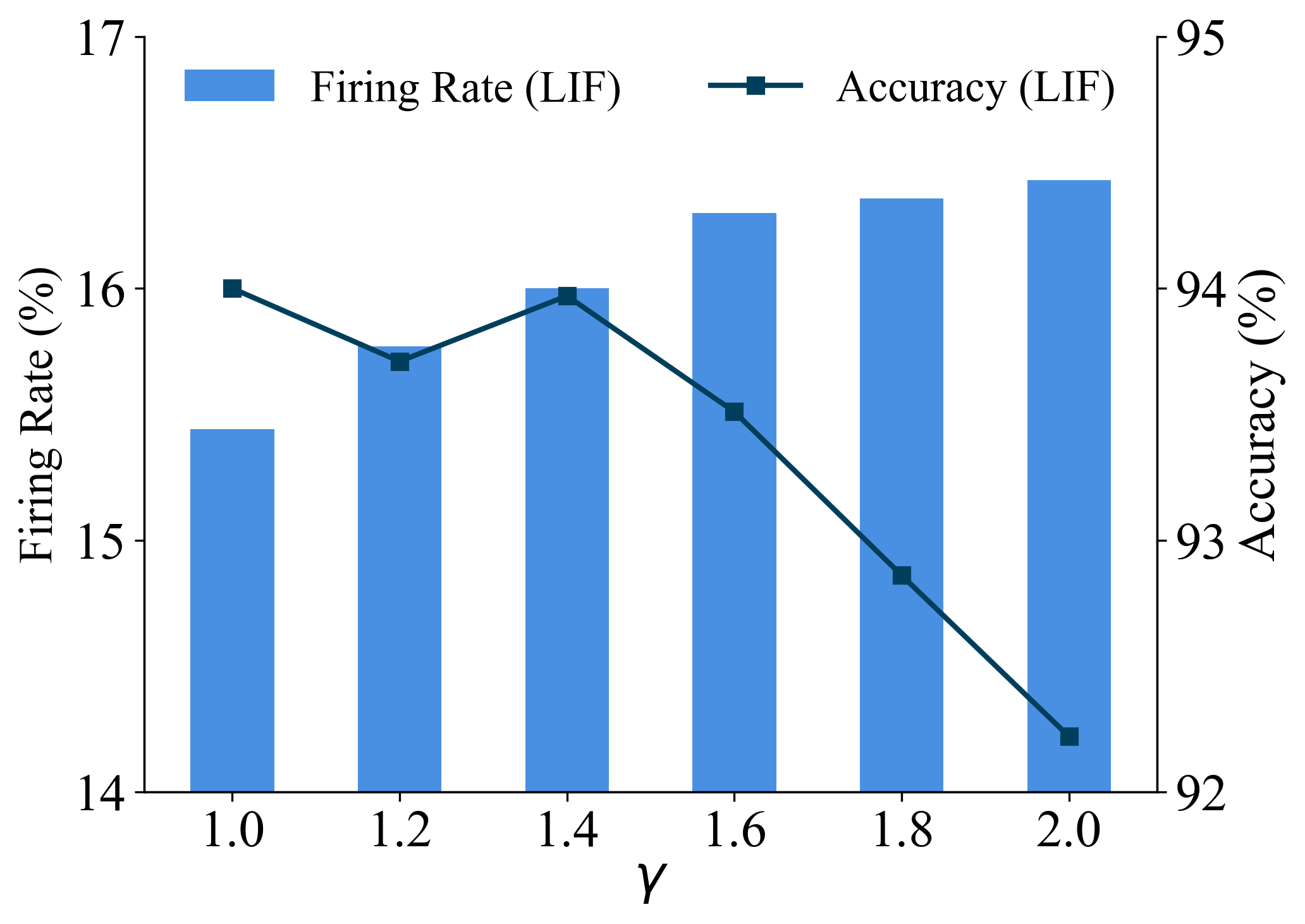}
        \caption{Firing \& accuracy vs. \(\gamma\)}
        \label{fig:subfig2}
    \end{subfigure}
    \hfill
    \begin{subfigure}[b]{0.23\textwidth}
        \centering
        \includegraphics[width=\textwidth]{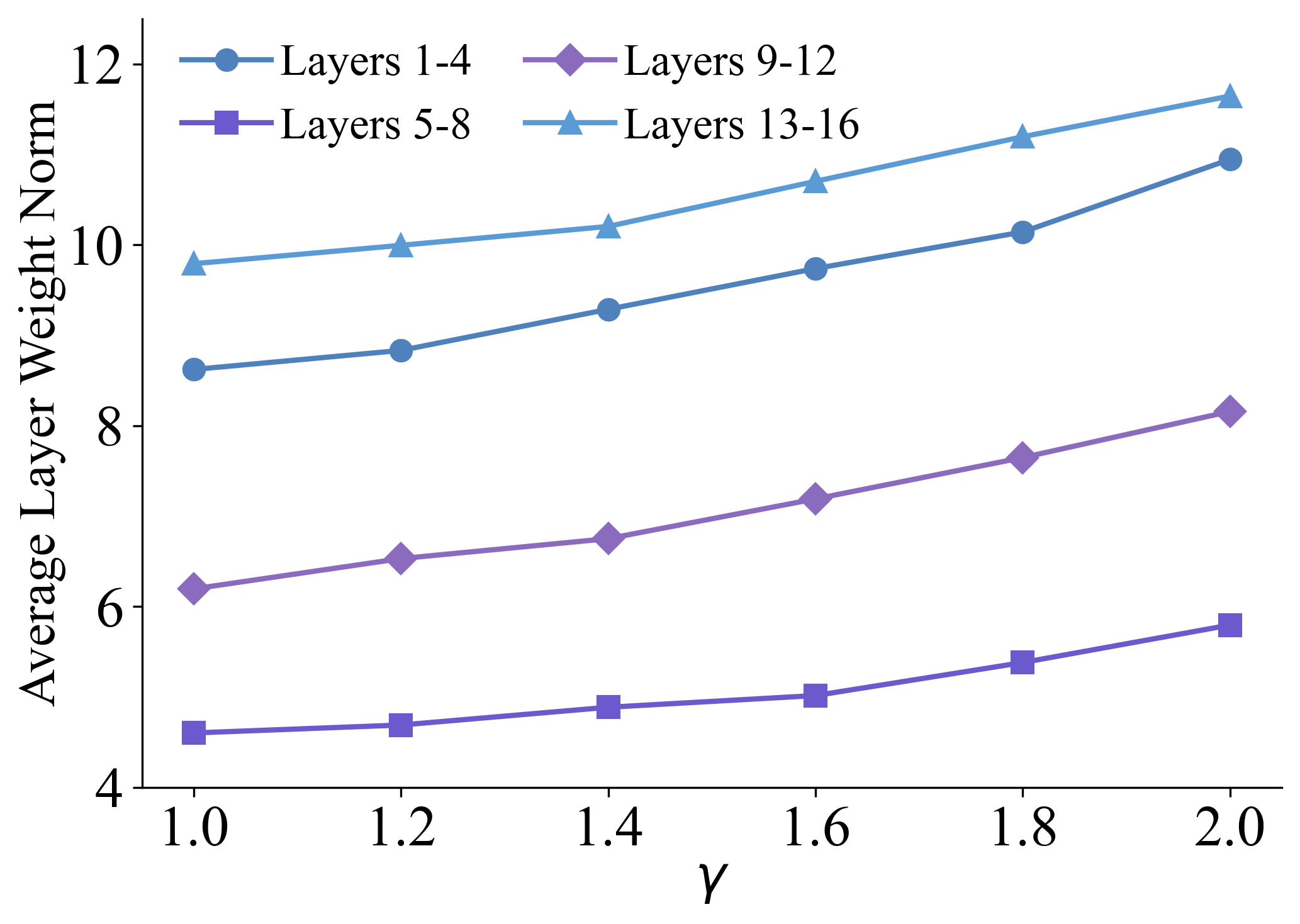}
        \caption{Weight norm vs. \(\gamma\)}
        \label{fig:subfig3}
    \end{subfigure}
    
   \caption{
    Effect of SG support width (\(\gamma\)) on network performance: 
    (a) Surrogate gradient method.
    (b) Changes in firing rate and accuracy with \(\gamma\). 
    (c) Average layer weight norm variation with \(\gamma\).
    }
   
    \label{fig:mainfig}
\end{figure}

This study conducts a theoretical examination of the relationship between $\gamma$ and both excessive activation and the disappearance of gradients. To address the difficulty of balancing overactivation and gradient vanishing solely by adjusting $\gamma$, we propose a temporal Inhibitory Leaky Integrate-and-Fire (ILIF) neuron model with two inhibitory units, namely the membrane potential inhibitory unit (MPIU) and the current inhibitory unit (CIU), which are designed to mimic the inhibitory mechanism of the human brain. The major contributions of this work are as follows:

\begin{itemize}
\item \textbf{Theoretical Analysis:} We conduct an in-depth mathematical investigation to clarify how the configuration of $\gamma$ affects overactivation and its conflict with gradient vanishing, while analyzing how ILIF effectively mitigates both issues.

\item \textbf{SNN Modeling:} 
The proposed MPIU and CIU in the ILIF neuron model function as feedforward and feedback inhibition, temporally interacting to reduce excessive activation and mitigate gradient vanishing.

\item \textbf{Experimental Validation:} Comprehensive experiments demonstrate that the ILIF model’s temporal inhibition mechanism generates fewer spikes while achieving a more stable training process and higher accuracy.
\end{itemize}

\section{Related Work}
\subsection{Overactivation Control}
To mitigate overactivation in SNN, prior works have proposed adaptive thresholds, residual potential modulation, and normalization. Adaptive threshold methods dynamically raise or lower the firing threshold based on recent activity patterns \cite{ding2022biologically,wei2023temporal,fang2020exploiting,zhang2019fast}. Residual modulation methods, such as TC-LIF and CLIF, reduce post-spike membrane potential to suppress reactivation, mimicking the effect of afterhyperpolarization (AHP) \cite{rw:stc-lif,niu2023cirm}. Normalization techniques standardize input distributions across layers and time to stabilize spiking activity \cite{jiangtab,guo2023membrane,duan2022temporal}. However, these methods often rely on handcrafted rules or instantaneous signals, lacking temporal adaptability and biological interpretability. In contrast, our ILIF model integrates inhibitory units that accumulate both short- and long-term activity, and crucially, adjusts inhibition based on the post-spike membrane potential, providing more precise, causal, and biologically aligned suppression than methods like TC-LIF and CLIF, which depend on pre-spike estimates and risk over-inhibition.

\subsection{Improving Gradient Propagation}
Due to the intrinsic properties of spiking neurons—namely, spike-based activation and membrane potential decay—temporal gradients often vanish over time. A key factor contributing to this vanishing is the leakage of membrane potential, which diminishes the impact of earlier inputs as time progresses. To mitigate this, models such as PLIF \cite{rw:plif} adaptively adjust the leakage rate, preserving critical temporal information. Similarly, gating mechanisms employed in models like GLIF \cite{rw:glif}, STC-LIF \cite{rw:stc-lif}, and SpikGRU \cite{dampfhoffer2022investigating} enhance temporal dependencies by regulating information flow within neurons. However, these methods typically incur additional computational overhead and are limited to intra-neuron dynamics. In contrast, our approach introduces biologically inspired temporal connections between inhibitory units, serving as shortcuts that simultaneously facilitate backward gradient propagation and forward inhibitory signaling, thereby enhancing temporal dependencies without increasing model parameters.


\section{Preliminary}
\begin{figure*}[h]
    \centering
    \begin{subfigure}[b]{0.25\textwidth} 
        \centering
        \includegraphics[width=\textwidth]{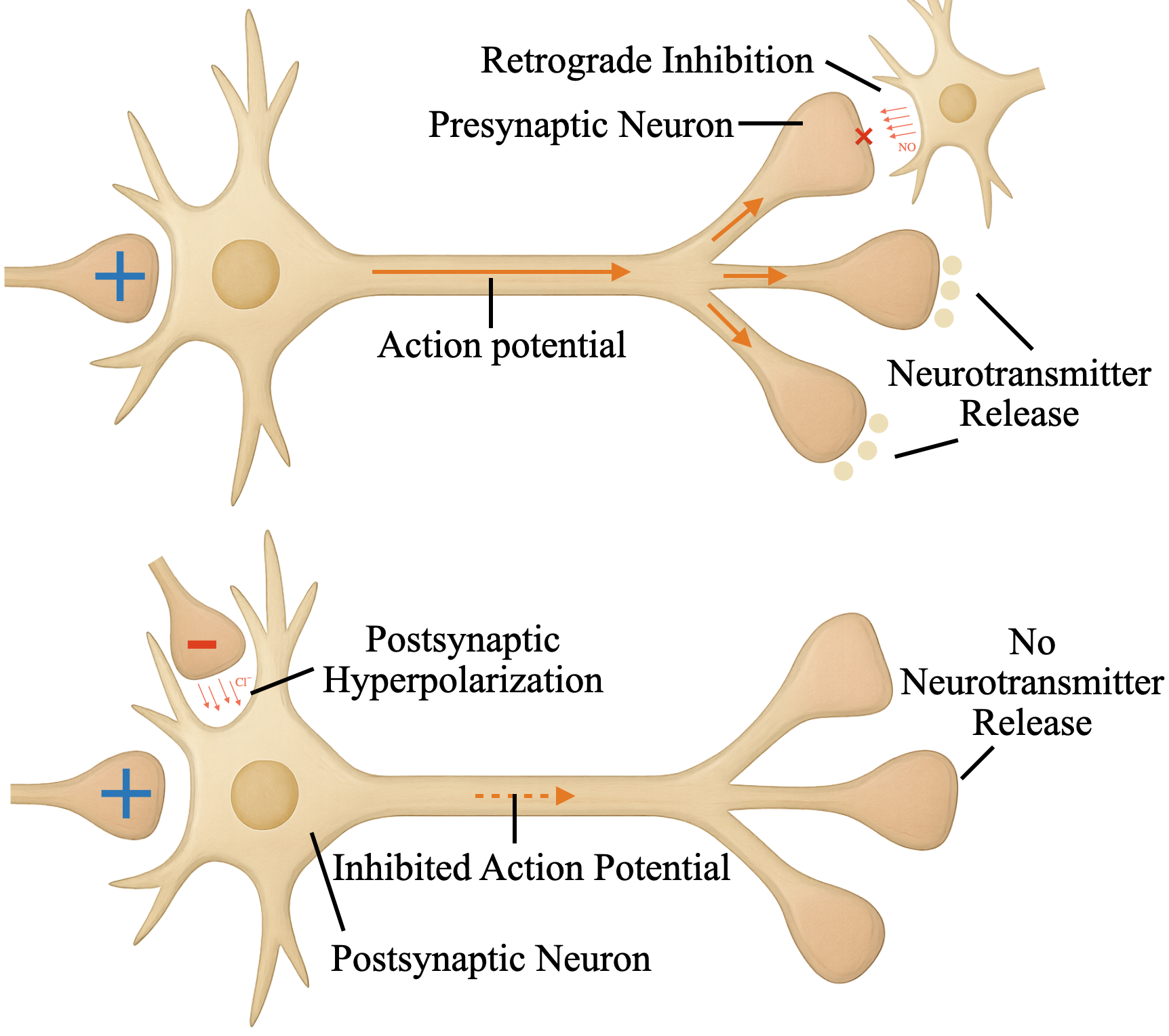}
        \caption{}
        \label{fig:neuron_subfig1}
    \end{subfigure}
    \hfill 
    \begin{subfigure}[b]{0.16\textwidth}
        \centering
        \includegraphics[width=\textwidth]{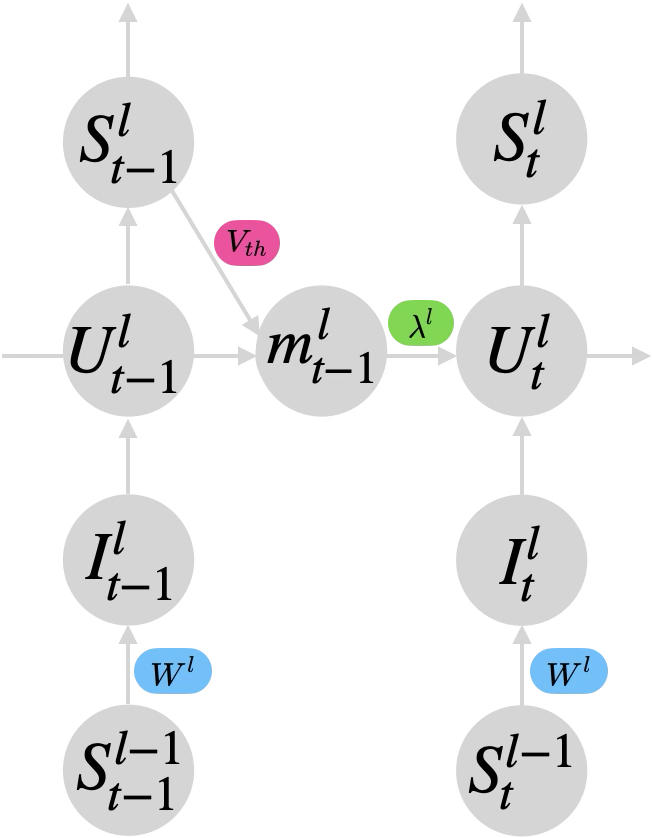}
        \caption{}
        \label{fig:neuron_subfig2}
    \end{subfigure}
    \hfill
    \begin{subfigure}[b]{0.15\textwidth}
        \centering
        \includegraphics[width=\textwidth]{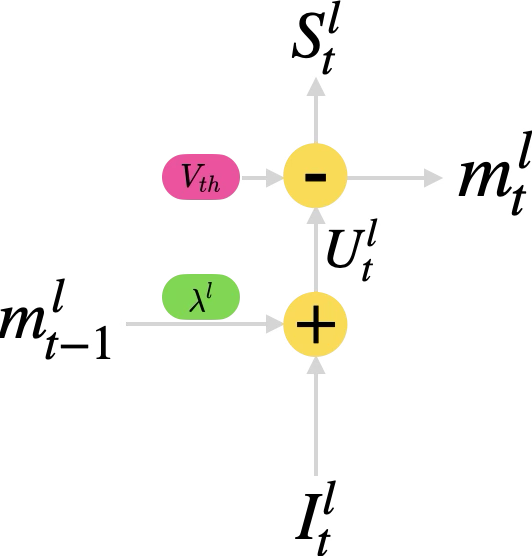}
        \caption{}
        \label{fig:neuron_subfig3}
    \end{subfigure}
    \hfill
    \begin{subfigure}[b]{0.16\textwidth}
        \centering
        \includegraphics[width=\textwidth]{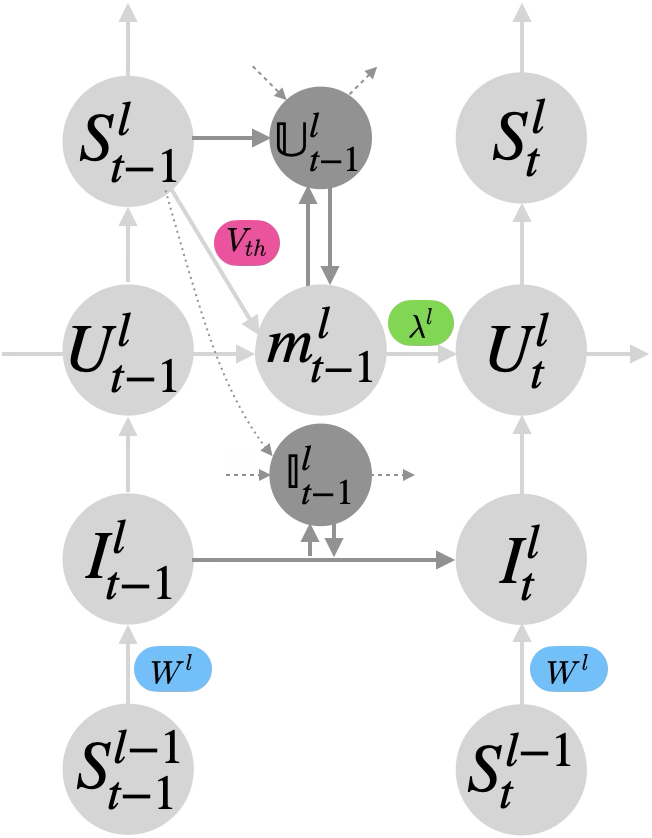}
        \caption{}
        \label{fig:neuron_subfig4}
    \end{subfigure}
    \hfill
    \begin{subfigure}[b]{0.23\textwidth}
        \centering
        \includegraphics[width=\textwidth]{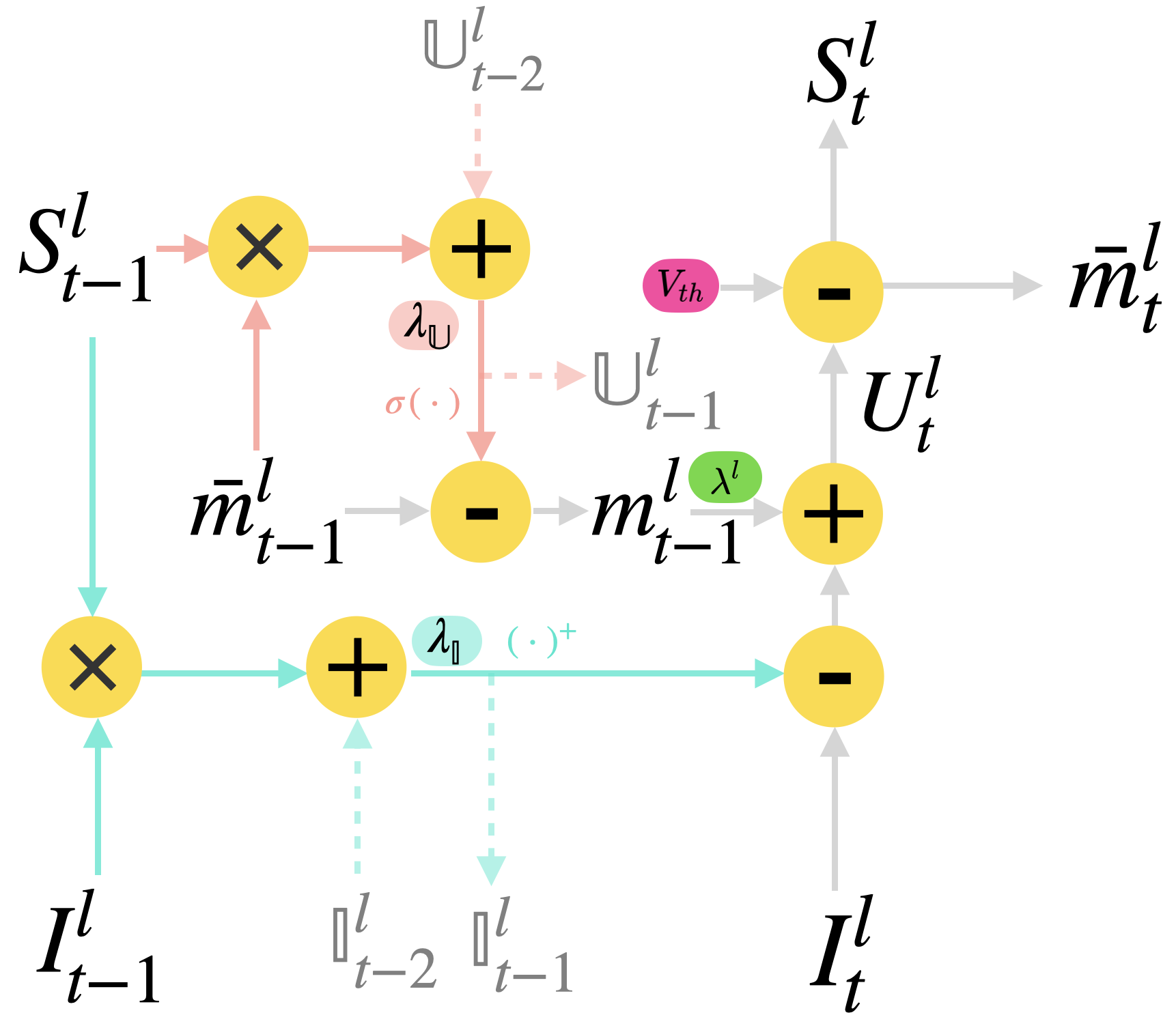}
        \caption{}
        \label{fig:neuron_subfig5}
    \end{subfigure}
    \caption{(a) Diagram of the inhibition mechanism. (b) Structure of the vanilla LIF model. (c) Internal operations of the vanilla LIF model. (d) Structure of the ILIF model. (e) Internal operations of the ILIF model.}
    \label{fig:LIF_ILIF}
\end{figure*}

\subsection{Vanilla LIF Neuron Model}
As shown in Figures~\ref{fig:neuron_subfig2} and \ref{fig:neuron_subfig3}, the LIF serves as a fundamental computational framework, encompassing membrane potential integration, leakage, and spike firing upon exceeding the threshold. The model is described by the following equations:
\begin{align}
    \bm{U}^l[t] &= \lambda \bm{m}^l[t-1] + \bm{I}^l[t] \label{eq:membrane_potential} \\
    \bm{I}^l[t] &= \bm{W}^{l} \bm{S}^{l-1}[t] \label{eq:input_current} \\
    \bm{S}^l[t] &= \mathbb{H}(\bm{U}^l[t] - V_{th}) = 
    \begin{cases} 
    1, & \bm{U}^l[t] \geq V_{th} \\ 
    0, & \text{Otherwise}  
    \end{cases} \label{eq:spike_output} \\
    \bm{m}^l[t] &= \bm{U}^l[t] - \bm{S}^l[t] V_{th} \label{eq:membrane_potential_update}
\end{align}
In the LIF model, $\bm{U}^l[t]$ is the membrane potential in the $l$-th layer at time step $t$, combining the postsynaptic current $\bm{I}^l[t]$ and the residual membrane potential from the previous time step $\bm{m}^l[t-1]$ with a decay factor $\lambda$. The postsynaptic current $\bm{I}^l[t]$ is computed as the product of the synaptic weight $\bm{W}^l$ and the input spike $\bm{S}^{l-1}[t]$. The output spike $\bm{S}^l[t]$ is generated using the Heaviside function $\mathbb{H}(\cdot)$, which outputs a spike ($\bm{S}^l[t] = 1$) when the membrane potential $\bm{U}^l[t]$ exceeds the threshold $V_{\text{th}}$ and no spike ($\bm{S}^l[t] = 0$) otherwise. After firing, the membrane potential $\bm{m}^l[t]$ is softly reset by subtracting $V_{\text{th}}$.

\subsection{Spatio-Temporal Backpropagation}
Training SNN directly involves implementing Backpropagation Through Time (BPTT) \cite{werbos1990backpropagation}  and using SG to handle the non-differentiability of spike signals. The gradient of the loss function $\mathcal{L}$ with respect to the weight $\bm{W}^l$ at layer $l$ is calculated across all time steps $T$. It is given by
\begin{equation}
\frac{\partial \mathcal{L}}{\partial \bm{W}^l} = \sum_{t=1}^T \frac{\partial \mathcal{L}}{\partial \bm{U}^l[t]} \cdot \frac{\partial \bm{U}^l[t]}{\partial \bm{W}^l}, l = L, L - 1, \cdots, 1,
\label{eq:5}
\end{equation}
This gradient is computed by decomposing it into spatial and temporal components:
\begin{equation}
\frac{\partial \mathcal{L}}{\partial \bm{U}^l[t]} = 
\underbrace{\frac{\partial \mathcal{L}}{\partial \bm{S}^l[t]} \cdot \frac{\partial \bm{S}^l[t]}{\partial \bm{U}^l[t]}}_{\text{Spatial Term}} + 
\underbrace{\frac{\partial \mathcal{L}}{\partial \bm{U}^l[t+1]} \cdot \frac{\partial \bm{U}^l[t+1]}{\partial \bm{U}^l[t]}}_{\text{Temporal Term}},
\label{eq:gradient_decompose}
\end{equation}
The \textbf{Spatial Term} represents the gradient contribution from the current time step, while the \textbf{Temporal Term} accounts for the influence of future time steps on the current gradient. It is calculated recursively as follows:

\begin{equation}
\resizebox{1.04\linewidth}{!}{$
\begin{aligned}
    \frac{\partial \mathcal{L}}{\partial \bm{U}^{l}[t]} &= \frac{\partial \mathcal{L}}{\partial \bm{S}^l[t]} \frac{\partial \bm{S}^l[t]}{\partial \bm{U}^{l}[t]} + \sum_{t' = t+1}^{T}  \frac{\partial \mathcal{L}}{\partial \bm{S}^l[t']} \frac{\partial \bm{S}^l[t']}{\partial \bm{U}^{l}[t']} \prod_{t'' = t+1}^{t'} \lambda \epsilon^l[t''-1] \\
    &= \sum_{t' = t}^{T} \frac{\partial \mathcal{L}}{\partial \bm{S}^l[t']} \frac{\partial \bm{S}^l[t']}{\partial \bm{U}^{l}[t']} \prod_{t'' = t+1}^{t'} \lambda \epsilon^l[t''-1]
\end{aligned}
$}
\label{eq:7}
\end{equation}


where the term \(\epsilon^l[t]\) for LIF model is as:
\begin{equation}
\epsilon^l[t] \triangleq \frac{\partial \bm{U}^{l}[t+1]}{\partial \bm{U}^{l}[t]} + \frac{\partial \bm{U}^{l}[t+1]}{\partial \bm{S}^l[t]} \frac{\partial \bm{S}^l[t]}{\partial \bm{U}^{l}[t]}
\label{eq:8}
\end{equation}
Detailed derivations are provided in Appendix~\ref{app:LIF_backprop}. The derivative \(\frac{\partial \bm{S}^l[t]}{\partial \bm{U}^l[t]}\) is approximated with the SG \(H'(\bm{U}^l[t])\), where \(H(\cdot)\) provides a smooth approximation of the Heaviside function. A common choice for the SG is the rectangular function, defined as:
\begin{equation}
    \frac{\partial \bm{S}^l[t]}{\partial \bm{U}^{l}[t]} = H'(\bm{U}^l[t]) =
    \frac1\gamma\mathds{1}\left(\left|\bm{U}^l[t]-V_{\mathrm{th}}\right|<\frac{\gamma}{2}\right),
\label{eq:surrogate}
\end{equation}
where $\mathds{1} (\cdot)$ is the indicator function, and $\gamma$ controls the SG support width. $\gamma$ is typically set to $V_{\text{th}}$ \cite{SLTT}. In this case
\begin{equation}
\epsilon^l[t] \triangleq 1 - V_{\text{th}}H'(\bm{U}^l[t])
\label{eq:11}
\end{equation}

The gradient \(\frac{\partial \mathcal{L}}{\partial \bm{S}^l[t]} \) in Eq.~\eqref{eq:7} varies depending on the layers:
\begin{equation}
    \frac{\partial\mathcal{L}}{\partial\bm{S}^l[t]} =
    \begin{cases} 
        \frac{\partial\mathcal{L}}{\partial\bm{S}^L[t]}, & \text{if } l = \mathcal{L}, \\
        \frac{\partial\mathcal{L}}{\partial\bm{U}^{l+1}[t]} \frac{\partial\bm{U}^{l+1}[t]}{\partial\bm{S}^l[t]}, & \text{if } l = \mathcal{L}-1, \dots, 1.
    \end{cases}
\label{eq:partialL/partialS}
\end{equation}
where \(\frac{\partial\bm{U}^{l+1}[t]}{\partial\bm{S}^l[t]}=\bm{W}^{l+1}\).

\subsection{Inhibitory Mechanism}
In neural systems, various inhibitory mechanisms ensure stable and efficient signal processing. For instance, after a neuron fires an action potential, its membrane potential undergoes afterhyperpolarization. This causes the potential to drop below the resting level, reducing excitability and preventing immediate re-firing. The effect of afterhyperpolarization accumulates over successive spikes, intensifying suppression over time. Additionally, activated downstream neurons send retrograde inhibitory signals to presynaptic neurons by modulating ion channel activity and suppressing neurotransmitter release. This process, mediated by GABAergic interneurons and commonly referred to as retrograde inhibition, suppresses presynaptic currents and limits excessive signal transmission \cite{inhibitory-mechanism-1,inhibitory-mechanism-2}. Inspired by these mechanisms, we enhance the LIF model by incorporating inhibitory processes to reduce overactivation and facilitate more effective gradient flow. The corresponding inhibitory mechanisms are illustrated in Figure~\ref{fig:neuron_subfig1}.

\section{Methodology}
\subsection{The Dilemma of \(\gamma\)}
\label{subsec: dilemma of gamma}

The dilemma of \(\gamma\) refers to the challenge of balancing overactivation and gradient vanishing in neural networks. The correlation between \(\gamma\) and overactivation has been underexplored, motivating our investigation into this correlation.

\begin{lemma}
    \label{lemma: gamma and overactivation}
    The likelihood of experiencing overactivation is positively correlated with \(\gamma\).
\end{lemma}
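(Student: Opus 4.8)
The plan is to trace how $\gamma$ enters the weight-update dynamics of BPTT and then argue that larger $\gamma$ systematically pushes weights — and hence membrane potentials — upward, which is precisely the condition for overactivation (accumulated potential exceeding $2V_{\mathrm{th}}$). First I would fix the surrogate gradient as the rectangular function of Eq.~\eqref{eq:surrogate}, so that $H'(\bm{U}^l[t]) = \frac{1}{\gamma}\mathds{1}(|\bm{U}^l[t]-V_{\mathrm{th}}|<\gamma/2)$. Note two competing $\gamma$-effects: the prefactor $1/\gamma$ shrinks each nonzero derivative, but the indicator's support $(V_{\mathrm{th}}-\gamma/2,\,V_{\mathrm{th}}+\gamma/2)$ widens with $\gamma$, so more neurons at more time steps receive a nonzero gradient. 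The key step is to compare these two effects on the magnitude of $\partial\mathcal{L}/\partial\bm{W}^l$ via Eqs.~\eqref{eq:5}, \eqref{eq:7} and \eqref{eq:partialL/partialS}: for a neuron whose potential lies in the enlarged band but outside the narrow one, the derivative jumps from $0$ to $1/\gamma>0$, so the set of neurons contributing to the update grows, and under a mild assumption that these contributions do not systematically cancel, the expected effective gradient signal driving $\|\bm{W}^l\|$ grows with $\gamma$. This is the mechanism Figure~\ref{fig:subfig3} reports empirically (weight norm increasing in $\gamma$), so I would invoke that as the quantitative anchor.

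Next I would connect weight growth to overactivation. From Eq.~\eqref{eq:input_current}, $\bm{I}^l[t]=\bm{W}^l\bm{S}^{l-1}[t]$, so for fixed presynaptic firing statistics the magnitude of the postsynaptic current scales with $\|\bm{W}^l\|$; by Eq.~\eqref{eq:membrane_potential}, $\bm{U}^l[t]=\lambda\bm{m}^l[t-1]+\bm{I}^l[t]$ then has a larger typical magnitude, and the residual $\bm{m}^l[t]=\bm{U}^l[t]-\bm{S}^l[t]V_{\mathrm{th}}$ carried forward is correspondingly larger. Iterating the recursion, a larger current scale makes it more probable that the integrated potential crosses $2V_{\mathrm{th}}$ at some step — the paper's stated criterion for overactivation — and also raises the steady-state firing rate. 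I would make this precise with a simple monotonicity/stochastic-dominance statement: if $\gamma_1<\gamma_2$ induce weight scales $c_1<c_2$, then the distribution of $\bm{U}^l[t]$ under $\gamma_2$ first-order stochastically dominates that under $\gamma_1$, hence $\Pr(\bm{U}^l[t]\ge 2V_{\mathrm{th}})$ is larger, giving the claimed positive correlation.

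The main obstacle is the first step: rigorously showing the effective gradient magnitude (and thus the learned weight norm) is increasing in $\gamma$ rather than decreasing, since the $1/\gamma$ prefactor naively suggests the opposite. The honest resolution is that what matters for overactivation is not a single gradient step but the accumulated drift over training of $\|\bm{W}^l\|$, which is governed by how many (neuron, time-step) pairs stay inside the surrogate's support and push in a consistent direction; widening the support keeps more borderline neurons "alive" and — because a neuron just below threshold is nudged toward firing — biases the drift upward. I would therefore state this as an assumption substantiated by the empirical curve in Figure~\ref{fig:subfig3} (monotone increase of layer weight norm with $\gamma$) rather than attempt a fully general proof, and then the remaining implications (larger $\|\bm{W}^l\|$ $\Rightarrow$ larger $\bm{U}^l[t]$ $\Rightarrow$ higher chance of exceeding $2V_{\mathrm{th}}$ $\Rightarrow$ more overactivation) follow by the elementary monotonicity argument above. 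Secondary care is needed to handle the decay $\lambda$ and the soft reset so that the forward recursion for $\bm{U}^l[t]$ is genuinely monotone in the current scale; this is routine but should be stated.
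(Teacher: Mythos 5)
Your proposal is correct and follows essentially the same route as the paper: larger $\gamma$ keeps more neurons inside the surrogate's support $\bigl[V_{\mathrm{th}}-\tfrac{\gamma}{2},\,V_{\mathrm{th}}+\tfrac{\gamma}{2}\bigr]$ so they keep receiving updates, the resulting growth of $\|\bm{W}\|$ is anchored in the empirical weight-norm curve of Figure~\ref{fig:subfig3}, and larger $\|\bm{W}\|$ amplifies $\bm{W}^l\bm{S}^{l-1}[t]$ and hence the chance of exceeding $2V_{\mathrm{th}}$. The only difference is that where you posit a non-cancellation assumption, the paper argues it explicitly via the sign structure of the output-layer gradient in Equation~\eqref{eq:derivative_expansion} (negative gradients for target-firing neurons and positive gradients for target-silent neurons both push $\|\bm{W}\|$ upward, propagated to earlier layers through Equation~\eqref{eq:partialL/partialS}), which is a modest strengthening of the same informal argument rather than a different approach.
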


\begin{proof}
When the neural network converges to an optimum such that no further weight updates occur, we consider the system to have reached an equilibrium. Let \(\bm{W}_1\) be the equilibrium weights when \(\gamma = \gamma_1\) and \(\bm{W}_2\) be the equilibrium weights when \(\gamma = \gamma_2\), with \(\gamma_2 > \gamma_1\). 
Suppose that there are neurons in the firing state (requiring \(\bm{W}^{l} \bm{S}^{l-1}[t] > 0\) to accumulate sufficient membrane potential), whereas others remain inactive (requiring \(\bm{W}^{l} \bm{S}^{l-1}[t] < 0\) to suppress activation).

In the output layer, the gradient terms related to the loss function are:
\(
    \frac{\partial \mathcal{L}}{\partial \bm{S}^L[t]} 
    = \bm{Y}^L[t] - \hat{\bm{Y}}^L[t], 
    \quad 
    \frac{\partial \bm{U}^L[t]}{\partial \bm{W}^L} 
    = \bm{S}^{L-1}[t],
\)
where \(\bm{Y}^L[t]\) represents the average output (within [0,1]), and \(\hat{\bm{Y}}^L[t]\) represents the predicted values encoded in a one-hot format (taking values of either 0 or 1). As derived from  Equations~\eqref{eq:5}, \eqref{eq:7}, \eqref{eq:surrogate} and \eqref{eq:11}, the expected gradient with respect to \(W^L\) is
\begin{equation}
\begin{aligned}
    \frac{\partial \mathcal{L}}{\partial W^L} = 
    & \sum_{t' = t}^{T} 
    \Bigg( (\overbrace{\bm{Y}^L[t']}^{[0,1]} - \hat{\bm{Y}}[t']) 
    \overbrace{H'(U^L[t'])}^{\geq 0} \\ 
     \prod_{t'' = t+1}^{t'} \lambda  &\overbrace{\big( 1 - V_{\text{th}} H'(U^L[t''])}^{\geq 0} \big) \Bigg) 
    \overbrace{\bm{S}^{L-1}[t]}^{\geq 0}
    \begin{cases}
        \leq 0, & \hat{\bm{Y}}[t'] = 1 \\ 
        \geq 0, & \hat{\bm{Y}}[t'] = 0 
    \end{cases}
    \label{eq:derivative_expansion}
\end{aligned}
\end{equation}

Under the SG framework, the gradient for a given sample becomes zero whenever the membrane potential moves outside the range 
$\bigl[V_{\text{th}} - \tfrac{\gamma}{2},\, V_{\text{th}} + \tfrac{\gamma}{2}\bigr]$. Neurons whose membrane potentials exceed or fall below this interval stop receiving weight updates. As $\lambda$ increases from $\lambda_1$ to $\lambda_2$, neurons previously outside the range may re-enter it, allowing further updates during the transition from $\bm{W}_1$ to $\bm{W}_2$.

According to Equation~ \eqref{eq:derivative_expansion}, Neurons expected to fire ($\hat{\bm{Y}}[t'] = 1$) receive negative gradients, which increase positive weights. Conversely, neurons expected to remain inactive ($\hat{\bm{Y}}[t'] = 0$) receive positive gradients, which decrease negative weights. This process contributes to an overall increase in the weight norm $\|\bm{W}\|$. Additionally, based on Equation~\eqref{eq:partialL/partialS} and the chain rule, changes in the final layer’s weights propagate to earlier layers, causing their weights to increase or decrease accordingly.

During activation, the membrane potential increment is \(\bm{W}^l \bm{S}^{l-1}[t]\). A larger \(\|\bm{W}\|\) amplifies this increment, increasing the chance of exceeding twice the threshold and causing overactivation. As shown in Figure~\ref{fig:subfig3}, increasing \(\gamma\) leads to a higher average layer weight norm, which amplifies the membrane potential further, thereby heightening the risk of overactivation.
\end{proof}

\begin{lemma}
    \label{lemma: gradient vanishing and gamma}
    The likelihood of experiencing gradient vanishing is inversely correlated with \(\gamma\).
\end{lemma}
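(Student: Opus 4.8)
The plan is to reuse the equilibrium/probabilistic setup of Lemma~\ref{lemma: gamma and overactivation} and to track the two mechanisms through which the gradient $\partial\mathcal{L}/\partial\bm{U}^l[t]$ of Eq.~\eqref{eq:7} can collapse to zero. Fix a network and regard the membrane potential $\bm{U}^l[t']$ at a given unit and time step as a random variable with density $p(\cdot)$ concentrated around its operating point. By Eqs.~\eqref{eq:surrogate} and \eqref{eq:partialL/partialS}, a summand of the temporal sum in Eq.~\eqref{eq:7}, or a link of the cross-layer recursion in Eq.~\eqref{eq:partialL/partialS}, contributes only when its surrogate factor $H'(\bm{U}^l[t'])=\tfrac1\gamma\mathds{1}(|\bm{U}^l[t']-V_{\mathrm{th}}|<\tfrac\gamma2)$ is nonzero, i.e.\ only when $\bm{U}^l[t']\in(V_{\mathrm{th}}-\tfrac\gamma2,\,V_{\mathrm{th}}+\tfrac\gamma2)$.

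First I would quantify the ``dead unit'' mechanism: the probability that one surrogate factor survives is $q(\gamma)=\int_{V_{\mathrm{th}}-\gamma/2}^{\,V_{\mathrm{th}}+\gamma/2}p(u)\,du$, which is nondecreasing in $\gamma$ and strictly increasing wherever $p>0$. In generic position (no exact cancellation among the summands), $\partial\mathcal{L}/\partial\bm{U}^l[t]$ vanishes precisely when every surrogate factor along every backward path is zero — the $t'\ge t$ terms of Eq.~\eqref{eq:7} together with the layerwise chain spawned by Eq.~\eqref{eq:partialL/partialS} — so, under a mild independence idealization, its probability of vanishing factorizes into terms of the form $1-q(\gamma)$ and is therefore strictly decreasing in $\gamma$. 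Equivalently, enlarging $\gamma$ drags units whose potentials had drifted outside $[V_{\mathrm{th}}-\tfrac\gamma2,V_{\mathrm{th}}+\tfrac\gamma2]$ back into the differentiable band and reinstates their updates.

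Next I would handle the temporal-attenuation mechanism. By Eq.~\eqref{eq:11}, $\epsilon^l[t]=1-V_{\mathrm{th}}H'(\bm{U}^l[t])$ equals $1-V_{\mathrm{th}}/\gamma$ on the band and $1$ off it, so every factor $\lambda\epsilon^l[t''-1]$ of the product in Eq.~\eqref{eq:7} has magnitude at most $\lambda$; for $\gamma\ge V_{\mathrm{th}}$ its in-band value grows from $0$ (at $\gamma=V_{\mathrm{th}}$, where the temporal gradient is annihilated on the band) toward $\lambda$ as $\gamma$ increases, so the geometric attenuation of long-range temporal gradients slackens (below $V_{\mathrm{th}}$ the product's in-band behaviour is no longer monotone, but there the dead-unit mechanism of the previous step already dominates, since $q(\gamma)$ is small). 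Combined with the first step, a larger $\gamma$ both widens the band that keeps single-step gradients alive and softens temporal decay, so the likelihood — and the severity — of gradient vanishing decreases with $\gamma$, in agreement with \cite{huang2024clif}.

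Finally I would state the caveat that is also the main obstacle. Exactly as in Lemma~\ref{lemma: gamma and overactivation}, changing $\gamma$ perturbs the equilibrium weights and hence the law of $\bm{U}^l[t']$, so all of the above is a ceteris-paribus claim for a fixed network. The delicate point is to argue that this induced distributional shift does not reverse the trend — that shrinking $\gamma$ acts chiefly by narrowing the surrogate support rather than by conveniently re-concentrating membrane potentials onto the threshold to hold the band occupancy $q(\gamma)$ constant. Figure~\ref{fig:subfig3} is the evidence one would invoke here, since the layer weight norm is seen to grow with $\gamma$ rather than self-adjusting to keep $q(\gamma)$ fixed.
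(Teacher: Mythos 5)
Your proposal is correct, and it contains the paper's actual argument as its second step, but it is broader than what the paper does. The paper's proof (Appendix~B, following \cite{huang2024clif}) is purely the temporal-attenuation mechanism: it restricts attention to the regime \(\gamma \ge V_{\text{th}}\) (citing prior practice for avoiding spatial gradient explosion), notes that the in-band factor \(\epsilon^l[t] = 1 - V_{\text{th}}/\gamma\) appearing in the product of Eq.~\eqref{eq:7} shrinks toward zero as \(\gamma\) decreases toward \(V_{\text{th}}\), and emphasizes the complete cutoff at the common default \(\gamma = V_{\text{th}}\), where \(\epsilon^l[t]=0\) nullifies all gradients flowing back through a near-threshold step. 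Your first mechanism --- the band-occupancy/``dead unit'' probability \(q(\gamma)\), with the gradient vanishing when every surrogate factor along every temporal and layerwise path is zero --- is not part of the paper's proof of this lemma (the paper invokes the band \([V_{\text{th}}-\gamma/2,\,V_{\text{th}}+\gamma/2]\) only in the proof of Lemma~\ref{lemma: gamma and overactivation}), and it gives a more literal reading of ``likelihood'' plus coverage of spatial dead units, at the cost of extra idealizations (independence, generic non-cancellation) that the paper never needs. Your explicit handling of \(\gamma < V_{\text{th}}\) and the closing caveat about the \(\gamma\)-induced shift of the membrane-potential distribution also go beyond the paper, which simply excludes that regime by convention and argues ceteris paribus. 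One small point neither treatment resolves, and which you could acknowledge: when a unit is in the band, the surviving surrogate magnitude is \(1/\gamma\), which shrinks as \(\gamma\) grows; this does not affect the ``likelihood of exact vanishing'' claim or the paper's temporal-product argument, but it means the magnitude-based story is not uniformly monotone in \(\gamma\).
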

The gradient in Eq.~\eqref{eq:7} includes the product \(\prod \epsilon^l[t]\), where \(\epsilon^l[t] = 1 - V_{\text{th}}/\gamma\) when the membrane potential is near threshold and \(\gamma > V_{\text{th}}\). As \(\gamma\) decreases toward \(V_{\text{th}}\), this factor becomes smaller, reducing the gradient magnitude over time. At \(\gamma = V_{\text{th}}\), \(\epsilon^l[t] = 0\), and gradients are completely blocked. See Appendix~\ref{app:gamma_temporal_impact} for details.

As seen from Lemmas \ref{lemma: gamma and overactivation} and \ref{lemma: gradient vanishing and gamma}, the impact of \(\gamma\) is twofold: increasing \(\gamma\) exacerbates overactivation, while decreasing it increases the risk of gradient vanishing. This trade-off demonstrates the limitation of adjusting \(\gamma\) alone to improve SNN performance. Thus, specialized modules are needed to strike an optimal balance between these competing effects.

\subsection{ILIF Neuron Model}
To address overactivation and gradient vanishing in the vanilla LIF model, we propose the ILIF model with two biologically-inspired inhibitory mechanisms.  The MPIU provides long-term inhibition by mimicking afterhyperpolarization, stabilizing neuronal excitability. The CIU delivers short-term inhibition via retrograde-like feedback, regulating presynaptic currents. The ILIF structure and internal operations are illustrated in Figures~\ref{fig:neuron_subfig4} and \ref{fig:neuron_subfig5}.

\textbf{\textit{MPIU}} Each spike contributes to the integration of the post-spike membrane potential $\bar{\bm{m}}^l[t]$ into $\mathbb{U}^l[t]$, which decays slowly ($\lambda_{\mathbb{U}} \approx 1$) to maintain a long-term memory of spiking activity:
\begin{equation}
\label{eq:MPIU}
\mathbb{U}^l[t] = \lambda_{\mathbb{U}} \left( \mathbb{U}^l[t-1] + \bm{S}^l[t] \cdot \bar{\bm{m}}^l[t] \right)
\end{equation}

This unit accumulates historical firing patterns, retaining the neuron’s firing history and mimicking the afterhyperpolarization phenomenon in biological systems. As firing activity accumulates, the accumulated inhibition \(\mathbb{U}^l[t]\) grows, progressively reducing excitability and preventing excessive firing. This inhibition directly affects the membrane potential before subsequent firings:

\begin{equation}
    \bm{m}^l[t] = \bar{\bm{m}}^l[t] - \bm{S}^l[t] \cdot \sigma\left( \mathbb{U}^l[t] \right)
\end{equation}

\noindent where $\sigma(\cdot)$ is a bounded sigmoid function that allows inhibition to increase with historical firing, in contrast to linear mappings or unbounded nonlinear functions that may lead to instability. This mirrors synaptic plasticity mechanisms that adjust synaptic strengths based on past activity, preventing excessive suppression and maintaining learnability over extended timescales. Moreover, the temporal linkage within MPIU mitigates vanishing gradients by offering a direct pathway for backpropagation, as further analyzed in Section \ref{sec:ILIF analysis}.

\textbf{\textit{CIU}} Complementing the MPIU’s long-term inhibition, the CIU provides rapid, short-term feedback. With a decay coefficient $\lambda_{\mathbb{I}}$ close to 0, primarily inhibiting the incoming current based on the previous time step’s current and the current spikes from the next layer:
\begin{equation}
\label{eq:CIU}
\mathbb{I}^l[t] = \lambda_{\mathbb{I}} \left( \mathbb{I}^l[t-1] + \bm{S}^l[t] \cdot \bm{I}^l[t] \right)
\end{equation}
The incoming current is adjusted as follows:
\begin{equation}
\bm{I}^l[t] = \bm{S}^{l-1}[t] \cdot W^l - \mathbb{I}^l[t-1]^+
\end{equation}
Here, \( \mathbb{I}^l[t-1]^+ \) ensures non-negative inhibition, mimicking the rapid feedback mediated by GABAergic interneurons in biological systems. By delivering rapid inhibition tied to the prior spikes and current, the CIU stabilizes excitability in tandem with MPIU.

Combining these mechanisms, the ILIF model evolves according to the following equations, with the corresponding pseudocode provided in Appendix~\ref{app:pseudocode}:
\begin{equation}
\begin{aligned}
\mathbb{I}^l[t-1] &= \lambda_{\mathbb{I}} \left( \mathbb{I}^l[t-2] + \bm{S}^l[t-1] \cdot \bm{I}^l[t-1] \right) \\
\bm{I}^l[t] &= \bm{S}^{l-1}[t] \cdot W^l - \mathbb{I}^l[t-1]^+ \\
\bm{U}^l[t] &= \lambda^l \bm{m}^l[t-1] + \bm{I}^l[t] \\
\bm{S}^l[t] &= \mathbb{H}\left( \bm{U}^l[t] - V_{\text{th}} \right) = 
\begin{cases}
1, & \bm{U}^l[t] \geq V_{\text{th}} \\
0, & \text{otherwise}
\end{cases} \\
\bar{\bm{m}}^l[t] &= \bm{U}^l[t] - \bm{S}^l[t] V_{\text{th}} \\
\mathbb{U}^l[t] &= \lambda_{\mathbb{U}} \left( \mathbb{U}^l[t-1] + \bm{S}^l[t] \cdot \bar{\bm{m}}^l[t] \right) \\
\bm{m}^l[t] &= \bar{\bm{m}}^l[t] - \bm{S}^l[t] \cdot \sigma\left( \mathbb{U}^l[t] \right)
\end{aligned}
\end{equation}

\begin{table*}[b]
    \centering
    \caption{Comparisons with other SNN neuron models on CIFAR10, CIFAR100, DVSCIFAR10 and DVSGesture}
    \label{table:accuracy}
    \renewcommand{\arraystretch}{1.05} 
    \begin{tabular}{c| l c c c}
        \toprule
        \textbf{Dataset} & \textbf{Method} & \textbf{Network Architecture} & \textbf{Time Step} & \textbf{Accuracy (\%)} \\
        \midrule
        \multirow{11}{*}{\vspace{5em} CIFAR10} 
        & STBP-tdBN \cite{STBP-tdBN} & ResNet-19 & 4 / 6 & 92.92 / 93.16 \\
        & Dspike \cite{Dspike} & Modified ResNet-18 & 4 / 6 & 93.66 / 94.05 \\
        & GLIF \cite{rw:glif} & ResNet-18 & 4 & 94.67 \\
        & SML \cite{SML} & ResNet-18 & 6 & 95.12 \\
        & CLIF \cite{huang2024clif} & ResNet-18 & 4 / 6 & 94.89 / 95.41 \\
        \noalign{\global\arrayrulewidth=0.2pt}\cline{2-5}\noalign{\global\arrayrulewidth=0.5pt} 
        & \textbf{Ours} & ResNet-18 & 4 / 6 & \textbf{95.24 / 95.49} \\
        \midrule
        \multirow{6}{*}{CIFAR100} 
        & STBP-tdBN \cite{STBP-tdBN} & ResNet-19 & 4 / 6 & 70.86 / 71.12 \\
        & Dspike \cite{Dspike} & Modified ResNet-18 & 4 / 6 & 73.35 / 74.24 \\
        & GLIF \cite{rw:glif} & ResNet-18 & 4 / 6 & 76.42 / 77.28 \\
        & SML \cite{SML} & ResNet-18 & 6 & 78.00 \\
        & CLIF \cite{huang2024clif} & ResNet-18 & 4 / 6 & 77.00 / 78.36 \\
        \noalign{\global\arrayrulewidth=0.2pt}\cline{2-5}\noalign{\global\arrayrulewidth=0.5pt} 
        & \textbf{Ours} & ResNet-18 & 4 / 6 & \textbf{77.43 / 78.51} \\
        \midrule
        \multirow{5}{*}{\vspace{0em} DVSCIFAR10}
        & STBP-tdBN \cite{STBP-tdBN} & ResNet-19 & 10 & 67.8 \\
        & Dspike \cite{Dspike} & ResNet-18 & 10 & 75.40 \\
        & OTTT \cite{OTTT} & VGG-11 & 10 & 76.27 \\
        & SLTT \cite{SLTT} & VGG-11 & 10 & 77.17 \\
        \noalign{\global\arrayrulewidth=0.2pt}\cline{2-5}\noalign{\global\arrayrulewidth=0.5pt} 
        & \textbf{Ours} & VGG-11 & 10 & \textbf{78.60} \\
        \midrule
        \multirow{5}{*}{\vspace{0em} DVSGesture}
        & STBP-tdBN \cite{STBP-tdBN} & ResNet-17 & 40 & 96.87 \\
        & OTTT \cite{OTTT} & VGG-11 & 20 & 96.88 \\
        & SLTT \cite{SLTT} & VGG-11 & 20 & 97.92 \\
        & CLIF \cite{huang2024clif} & VGG-11 & 20 & 97.92 \\
        \noalign{\global\arrayrulewidth=0.2pt}\cline{2-5}\noalign{\global\arrayrulewidth=0.5pt} 
        & \textbf{Ours} & VGG-11 & 20 & \textbf{97.92} \\
        \bottomrule
    \end{tabular}
\end{table*}
\subsection{Analysis of the ILIF Model}
\label{sec:ILIF analysis}
The ILIF model builds directly upon the vanilla LIF model by introducing inhibitory decay factors $\lambda_{\mathbb{U}}$ and $\lambda_{\mathbb{I}}$. Setting these factors to zero removes all inhibitory effects, reducing the ILIF model to the standard LIF model. This demonstrates that the ILIF model’s inhibitory mechanisms are natural extensions that refine the neuron's behavior without altering its fundamental structure. In this analysis, we explore the relationship between the ILIF and LIF models and the underlying principles that enable the ILIF model to overcome the limitations of the LIF model.

\begin{theorem}
\label{theorem:inhibition-->large weight}
     \(\bm{W}'\) is the equilibrium weight in the vanilla LIF, and \(\bm{W}''\) is the new equilibrium weight in the ILIF, \(\| \bm{W}'' \| > \| \bm{W}' \|\).
\end{theorem}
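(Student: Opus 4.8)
The plan is to mirror the equilibrium argument from the proof of Lemma~\ref{lemma: gamma and overactivation}, but now comparing the fixed point reached under the plain LIF recursion with the one reached under the ILIF recursion, and to show that the persistent inhibitory drain in ILIF forces the fixed point to sit at a larger weight norm. First I would pin down what ``equilibrium'' means here: at $\bm{W}'$ (and at $\bm{W}''$) the per-sample gradient of Equation~\eqref{eq:derivative_expansion} vanishes for every training sample and time step, which happens exactly when either the time-averaged output already matches the one-hot label ($\bm{Y}^L[t']=\hat{\bm{Y}}[t']$) or the membrane potential has left the surrogate window $[V_{\text{th}}-\tfrac{\gamma}{2},\,V_{\text{th}}+\tfrac{\gamma}{2}]$ so that $H'(\bm{U}^L[t'])=0$. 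In particular a neuron that must fire ($\hat{\bm{Y}}=1$) is pinned only once its membrane potential has been driven up to (or above) $V_{\text{th}}+\tfrac{\gamma}{2}$ at the relevant steps, and symmetrically for neurons that must stay silent; correctly reproducing the LIF classification on the training set therefore requires, in ILIF, that the same would-fire neurons reach at least the same membrane-potential level at the same steps.

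Second, I would isolate the monotone effect of the two inhibitory units. Holding weights, external input and surrogate gradient fixed, an induction over $t$ (and over layers within a step) shows that $\mathbb{U}^l[t]\ge 0$, because each spike contributes $\bm{S}^l[t]\bar{\bm{m}}^l[t]=\bm{S}^l[t](\bm{U}^l[t]-V_{\text{th}})\ge 0$ and $\lambda_{\mathbb{U}}>0$, while $\mathbb{I}^l[t-1]^+\ge 0$ by construction; hence, relative to the LIF update, ILIF subtracts the non-negative quantity $\mathbb{I}^l[t-1]^+$ from the input current (Equation~\eqref{eq:CIU}) and the strictly positive quantity $\sigma(\mathbb{U}^l[t])\ge\sigma(0)=\tfrac12$ from the residual potential after every spike. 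Consequently, plugging $\bm{W}'$ into the ILIF dynamics weakly lowers the membrane potentials of the would-fire output neurons, pulling them off the upper edge of the surrogate window back into its interior and so reactivating a nonzero gradient; by the sign analysis of Equation~\eqref{eq:derivative_expansion} this gradient is $\le 0$ for $\hat{\bm{Y}}=1$ neurons (growing their positive weights) and $\ge 0$ for $\hat{\bm{Y}}=0$ neurons (pushing their negative weights further from zero), and through Equation~\eqref{eq:partialL/partialS} and the chain rule the same push propagates to the earlier layers. Thus ILIF training started from $\bm{W}'$ strictly increases $\|\bm{W}\|$ and can only halt at a new equilibrium $\bm{W}''$ in which the would-fire neurons have again been lifted to the upper edge of the surrogate window against the per-step inhibitory drain $\mathbb{I}^l[t-1]^+ + \sigma(\mathbb{U}^l[t])$, which forces $\|\bm{W}''\|>\|\bm{W}'\|$.

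The step I expect to be the main obstacle is the dynamical comparison between the LIF and ILIF trajectories at fixed weights. A strictly pointwise coupling $\bm{U}^l_{\mathrm{ILIF}}[t]\le\bm{U}^l_{\mathrm{LIF}}[t]$ can momentarily fail --- if ILIF narrowly misses threshold at step $t$ while LIF fires and soft-resets, ILIF may carry a larger residual into step $t+1$ --- so the comparison must be made in an integrated sense, on the time-averaged output $\bm{Y}^L$ that actually enters the loss rather than step by step; this requires bounding the firing ``debt'' an under-firing ILIF neuron can build up and showing the accumulators $\mathbb{U}^l$, $\mathbb{I}^l$ still yield a net reduction in firing over the horizon $T$. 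Obtaining the \emph{strict} inequality additionally needs the mild assumption that the inhibition is genuinely active (some relevant neuron spikes more than once, so $\mathbb{U}^l[t]>0$, and $\lambda_{\mathbb{U}},\lambda_{\mathbb{I}}$ are not both zero), since otherwise ILIF degenerates to LIF and $\bm{W}''=\bm{W}'$. A secondary subtlety is that the inhibition also makes it \emph{easier} for the must-stay-silent neurons to keep $\bm{U}^L<V_{\text{th}}$, so one must argue that the extra norm forced onto the fire-side weights dominates any slack this creates on the silent-side weights --- a bookkeeping step handled by the same averaging over samples and time already used to derive Equation~\eqref{eq:derivative_expansion}.
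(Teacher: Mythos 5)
Your route is genuinely different from the paper's, and its decisive step does not go through. The paper proves the theorem from the \emph{backward} pass: it derives the ILIF gradient decomposition in Equation~\eqref{eq:ILIF gradient}, writes the total gradient as $a(\bm{W})+b(\bm{W})$, where $a$ is the ordinary LIF spatial-plus-temporal part and $b$ collects the extra terms $\phi^l[t]$ contributed by the MPIU and CIU pathways, shows in Appendix~\ref{app:ILIF_gradient_analysis} that this extra term has the same sign as $\bm{W}$ (it is $\geq 0$ when $\hat{\bm{Y}}=1$, $\leq 0$ when $\hat{\bm{Y}}=0$, matching the sign of the weights needed for firing/silence), and then argues that since $a(\bm{W}')=0$ and $a$ moves against the sign of $\bm{W}$, the zero of $a+b$ must sit at larger magnitude. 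Your proposal never identifies or uses these additional gradient pathways at all; instead you argue from the \emph{forward} dynamics that the spike-gated drains $\mathbb{I}^l[t-1]^+$ and $\sigma(\mathbb{U}^l[t])$ lower the membrane potentials of would-fire neurons at $\bm{W}'$, reactivate gradients, and that the new equilibrium must ``lift the would-fire neurons back to the upper edge of the surrogate window against the per-step inhibitory drain,'' forcing larger weights.

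That last claim is the gap. Nothing forces the ILIF equilibrium to restore the original membrane-potential or firing levels: the gradient in Equation~\eqref{eq:derivative_expansion} also vanishes when the time-averaged output matches the label under a sparser spike pattern, and the paper's Theorem~\ref{theorem:large weight-->strong inhibitory effect} asserts precisely that at the ILIF equilibrium the firing rate is strictly lower ($r''<r'$) -- so the premise that ``the same would-fire neurons reach at least the same membrane-potential level at the same steps'' is exactly what does not hold. Moreover the drain you invoke is itself spike-gated, so less firing means less drain; your compensation argument is circular unless that feedback is quantified, which your own ``main obstacle'' and ``bookkeeping'' caveats concede but do not resolve (and the silent-side slack you mention could in principle reduce the norm). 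What your argument does establish is only the initial direction of the weight update when ILIF training is started from $\bm{W}'$; to compare the equilibria you need either the paper's mechanism (the sign of the additional backward term $b(\bm{W})$ combined with the assumed monotonicity of $a(\bm{W})$) or a genuine fixed-point analysis of the coupled firing-weight system, neither of which your plan supplies.
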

\begin{proof}
In the vanilla LIF model, the gradient of the loss \(\mathcal{L}\) with respect to \(\bm{U}^{l}[t]\) consists of both a spatial and a temporal term (denoted as \(a(\bm{W})\), highlighted in \textcolor{cyan!100}{blue}). In contrast, the ILIF model introduces additional gradient terms (denoted as \(b(\bm{W})\), highlighted in \textcolor{red!100}{red}), as shown in Equation~(\ref{eq:ILIF gradient}). A detailed derivation is provided in Appendix~\ref{app:ILIF_derivation}.

\begin{equation}
\begin{aligned}
    \frac{\partial \mathcal{L}}{\partial \bm{U}^{l}[t]} &= 
    \textcolor{cyan!100}{
        \frac{\partial \mathcal{L}}{\partial \bm{S}^l[t]} \frac{\partial \bm{S}^l[t]}{\partial \bm{U}^{l}[t]}} \\
    &\textcolor{cyan!100}{+ \sum_{t' = t+1}^{T} \left( 
        \frac{\partial \mathcal{L}}{\partial \bm{S}^l[t']} \frac{\partial \bm{S}^l[t']}{\partial \bm{U}^{l}[t']} 
        \prod_{t'' = t+1}^{t'} \lambda \epsilon^l[t''-1] 
    \right)}
     \\ 
    & \textcolor{red!100}{ 
        + \sum_{t' = t+1}^{T} \left( 
            \phi^l[t'] \prod_{t'' = t+1}^{t'} \lambda \epsilon^l[t''-1] 
        \right) 
        + \phi^l[t]
    }
\label{eq:ILIF gradient}
\end{aligned}
\end{equation}

where \(\phi^l[t]\) encapsulates the derivatives associated with MPIU and CIU:
\begin{equation}
\begin{aligned}
    \phi^l[t] &= \frac{\partial \mathcal{L}}{\partial \mathbb{U}^{l}[t]}
    \left(
    \frac{\partial \mathbb{U}^l[t]}{\partial \bar{\bm{m}}^{l}[t]}
    \epsilon^l[t] +
    \frac{\partial \mathbb{U}^l[t]}{\partial \bm{S}^l[t]}
    \frac{\partial \bm{S}^l[t]}{\partial \bm{U}^l[t]}
    \right) \\
    &+
    \frac{\partial \mathcal{L}}{\partial \mathbb{I}^{l}[t]}
    (\frac{\partial \mathbb{I}^l[t]}{\partial \bm{S}^l[t]}
    \frac{\partial \bm{S}^l[t]}{\partial \bm{U}^l[t]})
\end{aligned}
\end{equation}


Consider the vanilla LIF model without inhibition, where the equilibrium weight $\bm{W}'$ satisfies $a(\bm{W}') = 0$. The function $a(\bm{W})$ includes terms that are products of $\bm{W}$ and typically oppose the sign of $\bm{W}$. Introducing an additional gradient $b(\bm{W})$, which shares the same sign as $\bm{W}$ (see Appendix~\ref{app:ILIF_gradient_analysis}), modifies the total gradient to $a(\bm{W}) + b(\bm{W})$, and the new equilibrium $\bm{W}''$ must satisfy $a(\bm{W}'') + b(\bm{W}'') = 0$.

If $\bm{W}'' > 0$, then $b(\bm{W}'') > 0$ and thus $a(\bm{W}'') < 0$ to maintain equilibrium. Since $a(\bm{W})$ becomes more negative as $\bm{W}$ increases, this implies $\bm{W}'' > \bm{W}'$. Similarly, if $\bm{W}'' < 0$, then $b(\bm{W}'') < 0$ and $a(\bm{W}'') > 0$, and because $a(\bm{W})$ becomes more positive as $\bm{W}$ decreases, it follows that $\bm{W}'' < \bm{W}'$.

Therefore, in both cases, the magnitude of the new equilibrium increases: $\|\bm{W}''\| > \|\bm{W}'\|$. Empirical results supporting this conclusion are provided in Section~\ref{sec:Effectiveness of Overactivation Inhibition}.
\end{proof}
\begin{theorem}
\label{theorem:large weight-->strong inhibitory effect}
    \(r'\) is the equilibrium firing rate in the vanilla LIF, and \(r''\) is the new equilibrium firing rate in the ILIF. It satisfies that \(r'' < r'\) even \(\|\bm{W}''\| > \|\bm{W}'\|\).
\end{theorem}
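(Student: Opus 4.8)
The plan is to characterize the equilibrium firing rate as a fixed point of a self-consistent map and then show that the ILIF map lies strictly below the LIF map, so its fixed point is smaller. First I would write the firing rate at layer $l$ as $r^l = \mathbb{E}_t[\,\mathds{1}(\bm{U}^l[t]\ge V_{th})\,]$ and expand $\bm{U}^l[t]$ under each model. For vanilla LIF, $\bm{U}^l[t]=\lambda\bm{m}^l[t-1]+\bm{W}^l\bm{S}^{l-1}[t]$, so the driving term is governed by $\|\bm{W}'\|$ and the incoming rate $r^{l-1}$. For ILIF the same decomposition holds but with two nonnegative corrections: the CIU replaces $\bm{I}^l[t]$ by $\bm{I}^l[t]-\mathbb{I}^l[t-1]^+$, and the MPIU replaces $\bm{m}^l[t-1]$ by $\bar{\bm{m}}^l[t-1]-\bm{S}^l[t-1]\sigma(\mathbb{U}^l[t-1])$. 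I would collect both into a single inhibition term $\Delta^l[t]\ge 0$ and write $\bm{U}^{l,\mathrm{ILIF}}[t]=\bm{U}^{l,\mathrm{LIF}}[t]-\Delta^l[t]$, where $\Delta^l[t]$ is nondecreasing in the recent firing history — increasing in $r^l$ through $\mathbb{U}^l[t]$ and in $r^{l-1}$ through $\mathbb{I}^l[t]$.

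Next I would set up the two equilibrium conditions. Theorem~\ref{theorem:inhibition-->large weight} gives $\|\bm{W}''\|>\|\bm{W}'\|$, so the bare drive in ILIF exceeds that in LIF, but only by a bounded increment that I would control by the ratio $\|\bm{W}''\|/\|\bm{W}'\|$ and the incoming rate. The crucial asymmetry is that this extra drive is a one-time, bounded shift, whereas $\Delta^l[t]$ accumulates: the MPIU integrates post-spike residuals with decay $\lambda_{\mathbb{U}}\approx 1$, so after $k$ spikes $\mathbb{U}^l[t]$ is on the order of $k$ times the mean post-spike residual (passed through the saturating $\sigma(\cdot)$), and the CIU subtracts an immediate term proportional to the just-fired current. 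I would therefore argue by contradiction: if the ILIF firing rate were at least $r'$, then $\mathbb{U}^l[t]$ would grow faster than the decay can dissipate within $T$ steps, driving $\sigma(\mathbb{U}^l[t])$ toward its upper bound and pushing the effective potential below $V_{th}$ for a positive fraction of time steps, contradicting $r''\ge r'$; hence $r''<r'$.

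To make this rigorous I would phrase it as a monotone fixed-point comparison: let $F_{\mathrm{LIF}}(r)$ and $F_{\mathrm{ILIF}}(r)$ be the maps sending an assumed firing rate to the realized $P(\bm{U}^l[t]\ge V_{th})$ under the corresponding dynamics at equilibrium weights, with fixed points $r'$ and $r''$. Since $\Delta^l[t]\ge 0$ is nondecreasing in $r$ and its accumulated magnitude outweighs the bounded drive increase once $r$ is bounded away from $0$, I would show $F_{\mathrm{ILIF}}(r)<F_{\mathrm{LIF}}(r)$ on the relevant range and that both maps are monotone; the standard comparison of fixed points of monotone maps then yields $r''<r'$. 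I would close by pointing to the measured firing-rate reductions in Section~\ref{sec:Effectiveness of Overactivation Inhibition} as empirical confirmation.

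The main obstacle I anticipate is making precise that the accumulated inhibition $\Delta^l[t]$ strictly dominates the bounded drive increase coming from $\|\bm{W}''\|>\|\bm{W}'\|$: this needs either a quantitative lower bound on how fast $\mathbb{U}^l[t]$ (hence $\sigma(\mathbb{U}^l[t])$) grows with the spike count — which hinges on the expected post-spike residual $\bar{\bm{m}}^l$ staying bounded away from zero — or an explicit upper bound on the weight-norm ratio in terms of $\lambda_{\mathbb{U}}$ and $\lambda_{\mathbb{I}}$. I would resolve it under the mild assumption, consistent with the soft-reset dynamics, that the expected post-spike residual is positive, so that each spike strictly increases the MPIU inhibition and the negative feedback needed for the strict inequality is guaranteed.
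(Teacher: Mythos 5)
Your overall framing---a self-consistent firing-rate map and a comparison of fixed points of monotone maps---is essentially the same device the paper uses, but your version hinges on a step that you leave open and that, as sketched, would not go through. You need the pointwise dominance $F_{\mathrm{ILIF}}(r) < F_{\mathrm{LIF}}(r)$, which you justify by asserting that the accumulated inhibition ``outweighs the bounded drive increase'' coming from $\|\bm{W}''\| > \|\bm{W}'\|$. This has the boundedness backwards: the membrane inhibition enters through the saturating $\sigma(\mathbb{U}^l[t]) \le 1$, so the per-step MPIU suppression is capped no matter how many spikes have accumulated, and the CIU term carries the small factor $\lambda_{\mathbb{I}} \approx 0$; meanwhile Theorem~\ref{theorem:inhibition-->large weight} gives no quantitative control on how much larger $\|\bm{W}''\|$ is than $\|\bm{W}'\|$, so the extra drive is the uncontrolled quantity. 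Your contradiction sketch (``$\sigma$ saturates and pushes the effective potential below threshold for a positive fraction of steps'') fails for the same reason, and the fixes you propose---positivity of the expected post-spike residual, or an assumed bound on the weight-norm ratio in terms of $\lambda_{\mathbb{U}},\lambda_{\mathbb{I}}$---either do not imply the needed dominance or import an assumption the paper does not make.

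The paper's proof never requires such an estimate. It abstracts the dynamics into monotone functions $R$, $F$, $J$ and works only with equilibrium self-consistency: $r' = R(F(\|\bm{W}'\|))$ for LIF and $r'' = R(F(\|\bm{W}''\|) - J(r''))$ for ILIF, with the inhibition $J$ evaluated at the ILIF equilibrium rate itself. Assuming $r'' \ge r'$ and using monotonicity of $R$, it extracts $F(\|\bm{W}''\|) - F(\|\bm{W}'\|) \ge J(r'') > 0$, and then compares $r''$ and $r'$ as a fixed point versus a reference point of the strictly decreasing map $H(r) = R(F(\|\bm{W}''\|) - J(r))$, using the normalization that inhibition vanishes in the vanilla model. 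Whatever one thinks of the strength of those abstract assumptions, the structural point is that the comparison happens at the fixed point, so no bound on the weight gap, and no dominance of inhibition over drive across a range of rates, is ever invoked. To salvage your route you would have to either adopt a similarly abstract self-consistency hypothesis in place of the pointwise map comparison, or actually supply the missing quantitative relation between the weight increase of Theorem~\ref{theorem:inhibition-->large weight} and the saturation level of $\sigma$; your proposal currently provides neither.
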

\begin{proof}
According to Lemma~\ref{lemma: gamma and overactivation}, we define the excitatory input function \( F(\|\mathbf{W}\|) \), which increases with \( \|\mathbf{W}\| \), indicating that larger weight magnitudes result in higher excitatory input necessary for neuron firing, and the response function \( R(x) \), which maps input \( x \) to firing rate \( r \), reflecting that higher inputs lead to increased activation. Additionally, based on Equations~\eqref{eq:MPIU} and~\eqref{eq:CIU}, we define the inhibition function \( J(r) \), which increases with the firing rate \( r \) and satisfies \( J(r) \geq 0 \) for all \( r \geq 0 \), ensuring that higher firing rates induce stronger inhibitory effects.

In the vanilla LIF without inhibition, the equilibrium firing rate is \( r' = R(F(\|\bm{W}'\|)) \). When inhibition is introduced, the new equilibrium rate satisfies \( r'' = R(F(\|\bm{W}''\|) - J(r'')) \). By Theorem~\ref{theorem:inhibition-->large weight}, we know that \( \|\bm{W}''\| > \|\bm{W}'\| \).

To prove \( r'' < r' \), suppose instead that \( r'' \ge r' \). Since \( R \) is increasing, we have
\begin{equation}
    F(\|\bm{W}''\|) - J(r'') \ge F(\|\bm{W}'\|),
\end{equation}
which implies \( F(\|\bm{W}''\|) - F(\|\bm{W}'\|) \ge J(r'') > 0 \).
Define the function \( H(r) = R(F(\|\bm{W}''\|) - J(r)) \). Since both \( R \) and \( J \) are strictly increasing, we have
\begin{equation}
    \frac{dH}{dr} = R'\bigl(F(\|\bm{W}''\|) - J(r)\bigr)\,(-J'(r)) < 0,
\end{equation}
so \( H(r) \) is strictly decreasing. Note that \( J(r') = 0 \), so
\begin{equation}
H(r') = R(F(\|\bm{W}''\|)) > R(F(\|\bm{W}'\|)) = r'.
\end{equation}
By the fixed-point theorem\footnote{Let \( f(x) \) be strictly decreasing. If \( f(x_1) > x_1 \), then any fixed point \( x^* \) such that \( f(x^*) = x^* \) must satisfy \( x^* < x_1 \).}, since \( H(r') > r' \) and \( H \) is decreasing, the fixed point \( r'' = H(r'') \) must satisfy \( r'' < r' \), contradicting our assumption.

Therefore, \( r'' < r' \). This shows that even though \( \|\bm{W}''\| > \|\bm{W}'\| \), the resulting firing rate is strictly lower, meaning that inhibition effectively balances the excitatory input and prevents overactivation.
\end{proof}

\begin{theorem}
    ILIF introduces additional gradient propagation pathways, mitigating the vanishing gradient problem.
\end{theorem}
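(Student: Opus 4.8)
The plan is to carry out BPTT on the full ILIF recurrence, treating the inhibitory states $\mathbb{U}^l[t]$ and $\mathbb{I}^l[t]$ as genuine nodes of the unrolled computational graph, to read off the extra backward terms already collected into $\phi^l[t]$ in Eq.~\eqref{eq:ILIF gradient}, and then to compare the factors transmitted along the old and the new backward routes. Since ILIF's unrolled graph contains every edge of the vanilla-LIF graph plus additional edges through the two inhibitory nodes, the statement is structural, and I would prove it by exhibiting, for each $t<t'$, a backward path from the loss at step $t'$ to $\bm{U}^l[t]$ that carries no $\epsilon^l[\cdot]$ factor --- precisely the factor responsible for vanishing in Lemma~\ref{lemma: gradient vanishing and gamma}.

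First I would write the adjoint recursion for the MPIU state. Because $\mathbb{U}^l[t]=\lambda_{\mathbb{U}}\bigl(\mathbb{U}^l[t-1]+\bm{S}^l[t]\bar{\bm{m}}^l[t]\bigr)$ and $\bm{m}^l[t]=\bar{\bm{m}}^l[t]-\bm{S}^l[t]\,\sigma(\mathbb{U}^l[t])$, the node $\mathbb{U}^l[t]$ has exactly two outgoing edges --- to $\mathbb{U}^l[t+1]$ with local factor $\lambda_{\mathbb{U}}$, and to $\bm{m}^l[t]$ with local factor $-\bm{S}^l[t]\sigma'(\mathbb{U}^l[t])$ --- so unrolling gives
\begin{equation}
\frac{\partial\mathcal{L}}{\partial\mathbb{U}^l[t]}=-\sum_{t'=t}^{T}\lambda_{\mathbb{U}}^{\,t'-t}\,\bm{S}^l[t']\,\sigma'\!\bigl(\mathbb{U}^l[t']\bigr)\,\frac{\partial\mathcal{L}}{\partial\bm{m}^l[t']},
\end{equation}
and an analogous unrolling for the CIU expresses $\partial\mathcal{L}/\partial\mathbb{I}^l[t]$ as a sum weighted by powers of $\lambda_{\mathbb{I}}$. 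Substituting these into $\phi^l[t]$ and into Eq.~\eqref{eq:ILIF gradient} exhibits $\partial\mathcal{L}/\partial\bm{U}^l[t]$ as the vanilla-LIF gradient plus the red contributions, of which $\phi^l[t]$ is added with coefficient $1$: it enters $\bm{U}^l[t]$ only through the local factors $H'(\bm{U}^l[t])$ and $\epsilon^l[t]$ at the \emph{current} step, while its dependence on the future is carried entirely by the $\lambda_{\mathbb{U}}^{\,t'-t}$ (resp. $\lambda_{\mathbb{I}}^{\,t'-t}$) weights above, not by $\prod_{t''=t+1}^{t'}\lambda\epsilon^l[t'']$.

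The heart of the proof is then this contrast. In the vanilla LIF the only route from the loss produced at a distant step $t'$ back to $\bm{U}^l[t]$ passes through $\bm{m}^l[t],\dots,\bm{m}^l[t'-1]$ and therefore carries $\prod_{t''=t+1}^{t'}\lambda\epsilon^l[t'']$; by Eq.~\eqref{eq:11}, $\epsilon^l[t'']=1-V_{\text{th}}H'(\bm{U}^l[t''])$ collapses to $0$ whenever the membrane at $t''$ lies within $\gamma/2$ of threshold with $\gamma=V_{\text{th}}$, so a single firing event in $(t,t']$ annihilates the whole chain. By contrast, the MPIU supplies the alternative route $\text{loss}\!\to\!\bm{U}^l[t'+1]\!\to\!\bm{m}^l[t']\!\to\!\mathbb{U}^l[t']\!\to\!\mathbb{U}^l[t'-1]\!\to\!\cdots\!\to\!\mathbb{U}^l[t]\!\to\!\bm{S}^l[t]\!\to\!\bm{U}^l[t]$, whose transmission factor is a product of $\lambda^l$, $-\bm{S}^l[t']\sigma'(\mathbb{U}^l[t'])$, the geometric weight $\lambda_{\mathbb{U}}^{\,t'-t}$, and $H'(\bm{U}^l[t])$ --- and contains \emph{no} $\epsilon^l[\cdot]$ term. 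Hence exactly when the vanilla contribution vanishes because an intermediate neuron fired, this ILIF contribution can still be nonzero: the entry factor $H'(\bm{U}^l[t])$ is nonzero precisely in the near-threshold regime where the spatial term is active (the case of interest), $\sigma'$ is strictly positive for the bounded sigmoid, and $\bm{S}^l[t']\neq 0$ is just the condition that there is spiking activity to backpropagate through. With $\lambda_{\mathbb{U}}\approx 1$ the weight $\lambda_{\mathbb{U}}^{\,t'-t}$ moreover decays negligibly with the temporal gap, so the surviving pathway also preserves long-range temporal dependencies; the CIU adds a further, short-range pathway of the same type with $\lambda_{\mathbb{I}}$ in place of $\lambda_{\mathbb{U}}$.

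The step I expect to be the main obstacle is making this comparison airtight without extra assumptions: one must check that the endpoint factors of the new pathway ($H'$ at the entry, $\sigma'$ and the terminal $\bm{m}\!\to\!\bm{U}$ factor at the exit) do not covertly reintroduce an $\epsilon$-type annihilation, and that the residual $\epsilon^l[t]$ appearing inside $\phi^l[t]$ --- multiplied there by $\bm{S}^l[t]$ --- concerns only the single current step and cannot wipe out the whole sum. I would handle this by keeping the claim existential (for every $t<t'$ there is a backward path in the ILIF graph from the $t'$-loss node to $\bm{U}^l[t]$ whose transmission factor involves no $\epsilon^l[\cdot]$, whereas in vanilla LIF no such path exists), which already establishes that ILIF strictly enlarges the set of non-vanishing gradient routes. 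A fully quantitative lower bound on $\|\partial\mathcal{L}/\partial\bm{U}^l[t]\|$ would additionally require control of the weights and of $\partial\mathcal{L}/\partial\bm{m}^l[t']$, which I would leave to the empirical gradient-norm measurements reported in the experiments rather than pursue analytically.
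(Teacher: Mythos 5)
Your proposal is correct and follows essentially the same route as the paper: you identify the MPIU temporal chain as a backward shortcut with per-step factor $\lambda_{\mathbb{U}}$ (no decay when $\lambda_{\mathbb{U}}=1$) that enters $\bm{U}^l[t]$ through $\frac{\partial \mathbb{U}^l[t]}{\partial \bm{S}^l[t]}\frac{\partial \bm{S}^l[t]}{\partial \bm{U}^l[t]}$ and carries no $\epsilon^l[\cdot]$ factors, which is exactly the paper's argument (its Appendix derivation of $\partial\mathcal{L}/\partial\mathbb{U}^l[t]$ matches your adjoint unrolling). Your additional care about the entry/exit factors, the CIU pathway, and the existential framing of the claim only sharpens the same idea.
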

\begin{proof}
According to Lemma \ref{lemma: gradient vanishing and gamma}, the vanishing gradient problem in LIF is caused by the term \( \epsilon^l[t] \), which gradually accumulates to zero with lower \(\gamma\). In ILIF, however, membrane potential inhibitory units introduce a direct shortcut that connects all time steps. Because each inhibitory unit is interconnected and the decay coefficient is set to 1 in our experiments, there exists a shortcut in the backward pass given by \(\frac{\partial \mathbb{U}^{l}[t]}{\partial \bm{S}^{l}[t]}
\frac{\partial \bm{S}^{l}[t]}{\partial \bm{U}^{l}[t]}
\) enabling each time step’s inhibitory unit to transmit the gradient directly to the preceding time step. Consequently, as inhibition stops after \(T\) with the output produced, the complete chain of partial derivatives from time step \(T-1\) back to \(t\) can be written as:
\begin{equation}
\begin{aligned}
&\frac{\partial \mathcal{L}}{\partial \mathbb{U}^{l}[T-1]}
\frac{\partial \mathbb{U}^{l}[T-1]}{\partial \mathbb{U}^{l}[T-2]}
...
\frac{\partial \mathbb{U}^{l}[t+1]}{\partial \mathbb{U}^{l}[t]}
\frac{\partial \mathbb{U}^{l}[t]}{\partial \bm{S}^{l}[t]}
\frac{\partial \bm{S}^{l}[t]}{\partial \bm{U}^{l}[t]} \\
&= \frac{\partial \mathcal{L}}{\partial \mathbb{U}^{l}[T-1]}
\frac{\partial \mathbb{U}^{l}[T-1]}{\partial \bm{S}^{l}[t]}
\frac{\partial \bm{S}^{l}[t]}{\partial \bm{U}^{l}[t]}
\end{aligned}
\end{equation}

Since there is no decay in these shortcuts, the gradient can flow from the time step \(T-1\) back to any earlier step without attenuation. This mechanism mitigates the vanishing gradient problem and facilitates effective gradient propagation.
\end{proof}

\setcounter{figure}{3}
\begin{figure*}[t]
    \centering
    \includegraphics[width=1\textwidth]{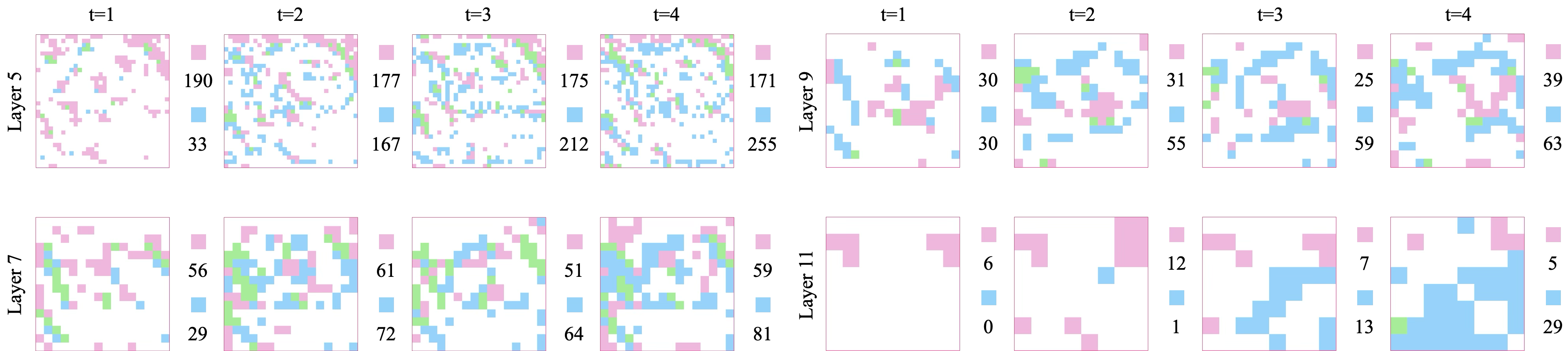} 
    \caption{Comparison of firing rates between LIF and ILIF in Layers 5, 7, 9, and 11 on CIFAR10. Blue pixels indicate LIF spikes, red pixels indicate ILIF spikes, and green pixels indicate simultaneous spikes. Spike counts are displayed beside each time step.}
    \label{fig:Firing_comparison}
\end{figure*}

\section{Experiments}
\subsection{Experimental Settings}
In this research, we conduct experiments on two types of datasets without data augmentation: standard image classification datasets (CIFAR10 and CIFAR100) and neuromorphic datasets (DVSCIFAR10 and DVSGesture). For a comprehensive and unbiased evaluation, the proposed ILIF model is implemented under various settings, including different network architectures, neuron models, and training time steps. Further details are provided in Appendix~\ref{app:experiment_setting}.

\subsection{Evaluation of Accuracy}
As shown in Table~\ref{table:accuracy}, our ILIF model consistently achieves superior top-1 accuracy compared to existing methods. Specifically, our method achieves 95.49\% on CIFAR10 and 78.51\% on CIFAR100 using the ResNet-18 architecture with 6 time-steps. For the neuromorphic dataset, ILIF achieves an accuracy of 78.60\% on DVSCIFAR10 and matches the top-1 accuracy of 97.92\% achieved by other models on DVSGesture, further demonstrating its effectiveness. More experiments are provided in Appendix~\ref{app:additional_experiment}.

\subsection{Effectiveness of Overactivation Inhibition}
\label{sec:Effectiveness of Overactivation Inhibition}
We quantified the proportion of neurons activated for more than half the time. As shown in Figure~\ref{fig:continous_overhalf}, the ILIF model consistently exhibits lower firing rates and fewer instances of continuous activation across all layers compared to LIF model on DVSCIFAR10 and DVSGesture. Furthermore, the reduction in continuous activation rates leads to a lower overall average firing rate, especially with the activation rate by over 30\% on DVSGesture. Additional figures for all datasets are provided in the Appendix~\ref{app:continous_firing_rate}.

To better understand the firing patterns across time steps and layers, we compared the spike counts of the LIF and ILIF models. According to Theorems~\ref{theorem:inhibition-->large weight} and \ref{theorem:large weight-->strong inhibitory effect}, inhibitory units lead to larger synaptic weights but also induce strong inhibition. As shown in Figure~\ref{fig:Firing_comparison}, the ILIF model exhibits a higher firing rate in the first time step due to the absence of immediate inhibition and larger synaptic weights. This efficiently captures critical information early, avoiding prolonged accumulation and delayed activation as observed in the LIF model, where the number of activations rises rapidly across time steps. In later time steps, firing rates decrease as inhibition suppresses redundant activations, ensuring efficient information transfer.

\setcounter{figure}{2}
\begin{figure}[h]
    \centering
    \begin{subfigure}[b]{0.24\textwidth}
        \centering
        \includegraphics[width=\textwidth]{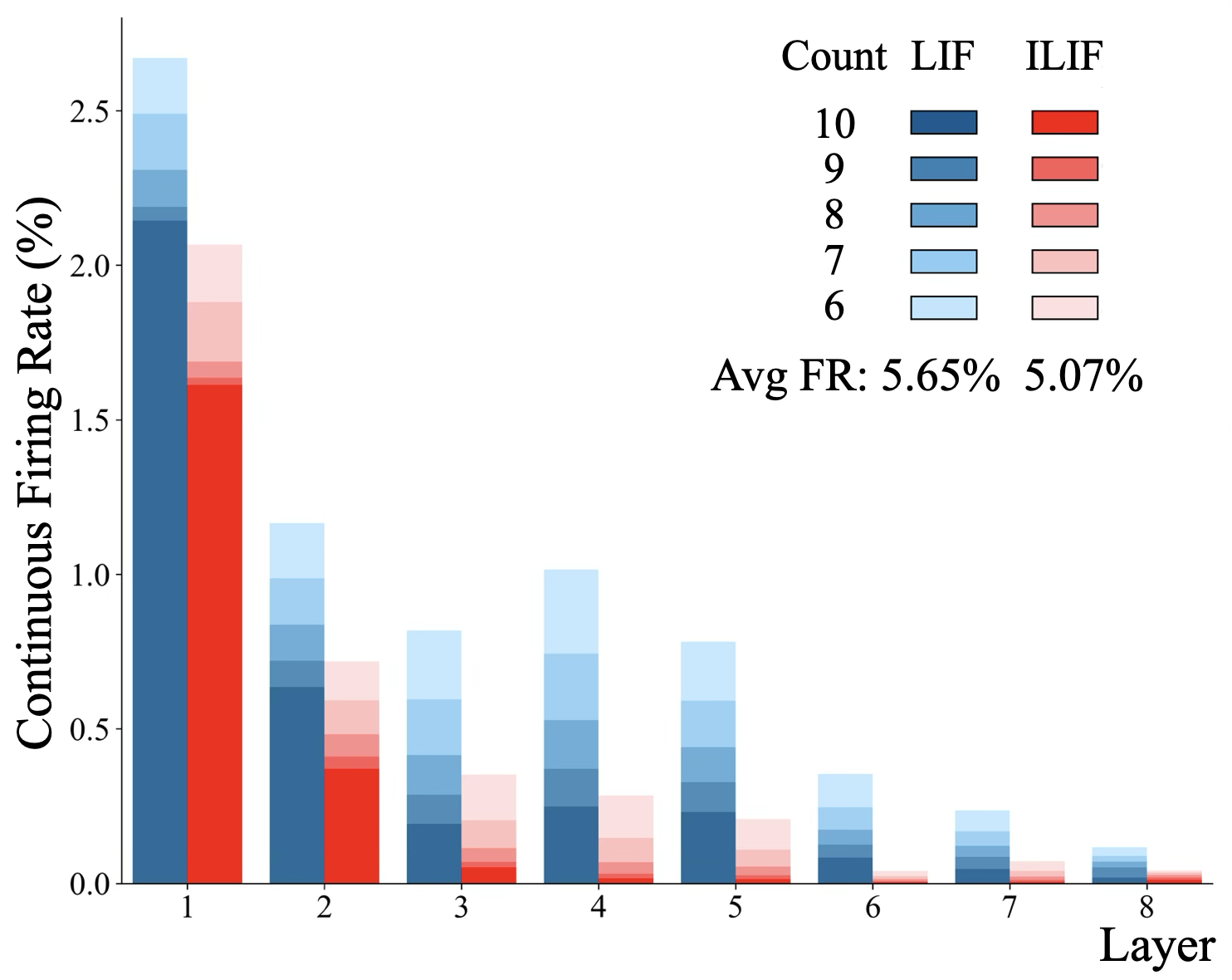}
        \vspace{-16pt}
        \caption{DVSCIFAR10}
        \label{fig:continous_10}
    \end{subfigure}
    \hfill
    \begin{subfigure}[b]{0.23\textwidth}
        \centering
        \includegraphics[width=\textwidth]{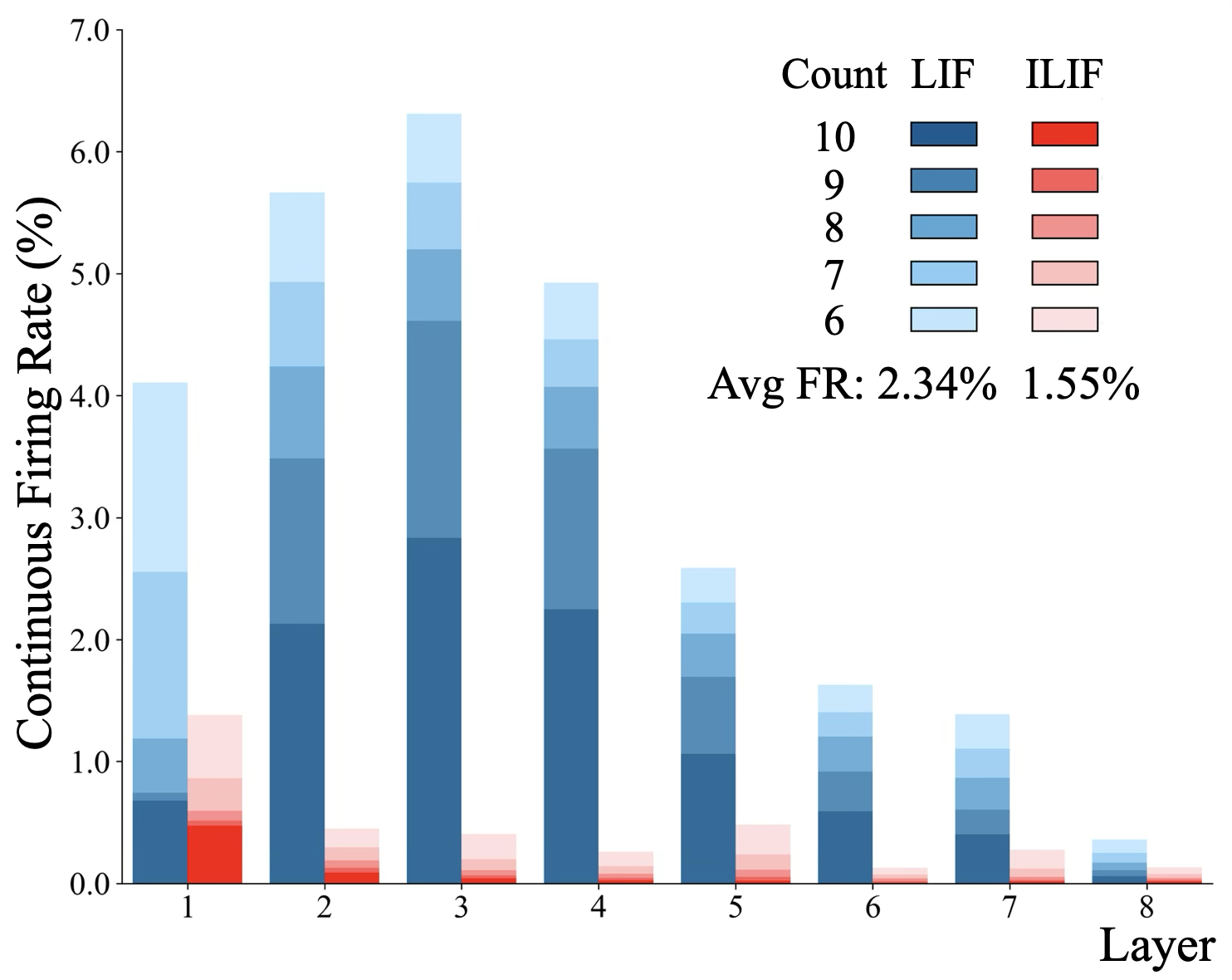}
        \vspace{-16pt}
        \caption{DVSGesture}
        \label{fig:continous_dvs}
    \end{subfigure}
    \caption{Continuous firing rate comparison on neuromorphic data.}
    \label{fig:continous_overhalf}
\end{figure}

\subsection{Evaluation of the Effect of \(\gamma\) on ILIF}
Figure~\ref{fig:gamma_lif_ilif} illustrates the impact of varying \(\gamma\) on the firing rate and accuracy of the LIF and ILIF models. As \(\gamma\) increases, the LIF model experiences a significant decline in accuracy due to heightened activation, accompanied by a relatively higher firing rate. In contrast, the ILIF model consistently demonstrates a lower firing rate, highlighting its inhibitory effect. Furthermore, the ILIF model works synergistically with the backpropagation shortcut provided by its inhibitory units, ensuring more stable accuracy across all \(\gamma\) values.
\setcounter{figure}{4}
\begin{figure}[h]
    \centering
    \includegraphics[width=0.95\linewidth]{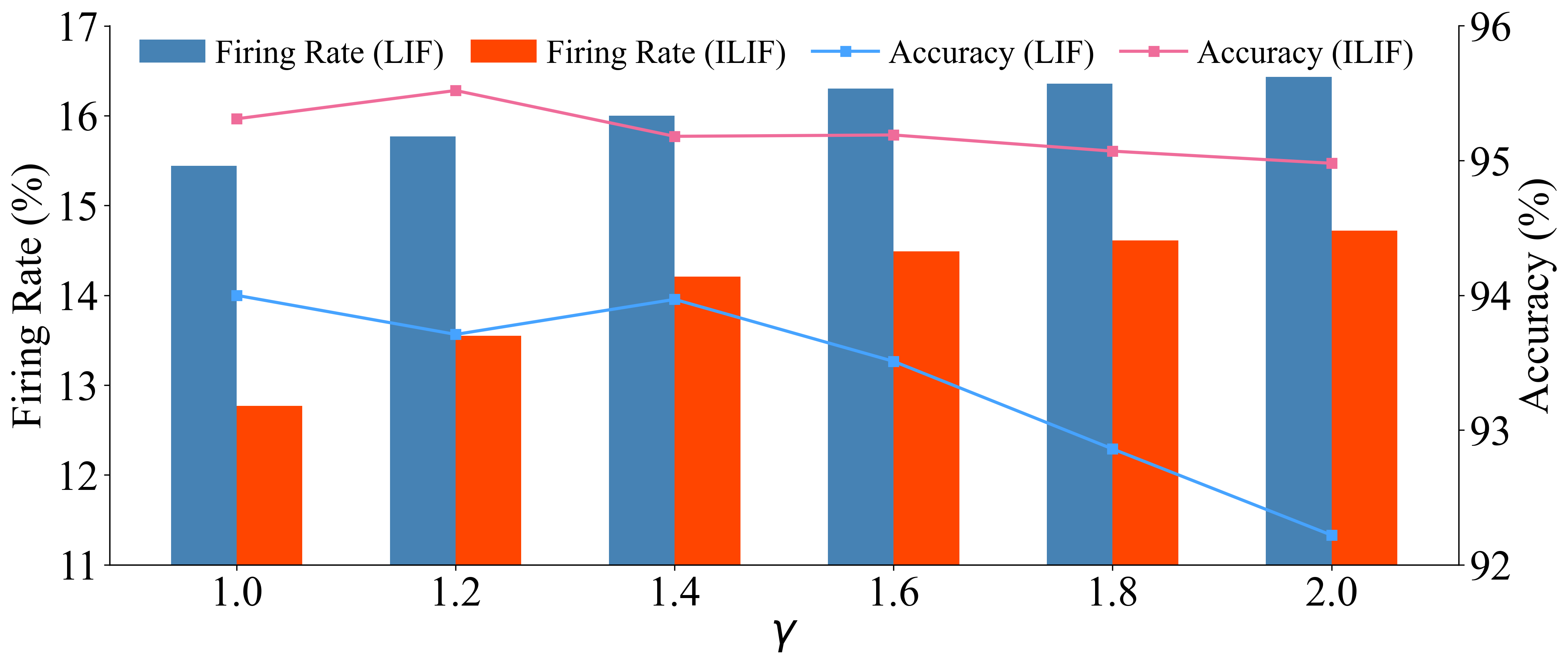}
    \captionsetup{skip=2pt}
    \caption{Firing rate and accuracy comparison for LIF and ILIF with respect to \(\gamma\) on CIFAR10}
    \label{fig:gamma_lif_ilif}
\end{figure}

\subsection{Ablation Study}
We performed ablation experiments to evaluate the contributions of each inhibitory unit in the ILIF model. As shown in Table~\ref{tb:Ablation_Study_ILIF}, removing either unit results in a noticeable drop in accuracy, confirming their necessity. To explore the generalizability of our proposed inhibitory mechanism, we integrated it into the PLIF model \cite{rw:plif}, a LIF variation that adjusts decay parameters. This integration (Table~\ref{tb:Impact_Inhibitory_Unit}) significantly enhances performance, demonstrating the value of our inhibitory mechanism given that PLIF only modifies decay parameters and inherits the limitations of LIF neurons.

\begin{table}[!htb]
\centering
\setlength{\tabcolsep}{2pt} 
\caption{Ablation study of ILIF model components.}
\renewcommand{\arraystretch}{0.9} 
\begin{tabular}{c c|c c c c}
\toprule
\multicolumn{2}{c|}{Settings} & \multirow{2}{*}{CIFAR10} & \multirow{2}{*}{CIFAR100} & \multirow{2}{*}{DVSCIFAR10} & \multirow{2}{*}{DVSGesture} \\ \cline{1-2}
MPIU & CIU &  &  &  &  \\ \midrule
\checkmark & \checkmark & \textbf{95.49} & \textbf{78.51} & \textbf{78.60} & \textbf{97.92} \\
\checkmark & \xmark  & 95.11 & 78.06 & 78.21 & 97.56 \\
\xmark  & \checkmark & 94.63 & 76.84 & 77.13 & 97.22 \\
\xmark  & \xmark  & 93.76 & 75.46 & 75.50 & 96.88 \\ 
\bottomrule
\end{tabular}
\label{tb:Ablation_Study_ILIF}
\end{table}

\begin{table}[!htb]
\centering
\setlength{\tabcolsep}{2pt} 
\caption{Impact of adding inhibitory unit to PLIF model.}
\renewcommand{\arraystretch}{0.9} 
\begin{tabular}{ccccc}
\toprule
Method & CIFAR10 & CIFAR100 & DVSCIFAR10 & DVSGesture \\
\midrule
PLIF            & 94.25           & 75.65            & 75.32             & 95.49             \\ 
IPLIF           & 95.61           & 78.66            & 78.41             & 96.55             \\ 
\bottomrule
\end{tabular}
\label{tb:Impact_Inhibitory_Unit}
\end{table}

\section{Conclusion}
This paper analyzes the limitations of the LIF model, specifically overactivation and gradient vanishing caused by the SG support width \(\gamma\). Inspired by biological inhibitory mechanisms, we propose the ILIF model, which integrates MPIU and CIU to regulate neural activity and enhance gradient propagation simultaneously. Experiments show that ILIF achieves state-of-the-art performance, reduces continuous firing rates, and enhances result stability, providing an efficient and reliable solution for advancing neuromorphic computing.

\appendix
\onecolumn
\section{Gradient Derivation Based on LIF model}
\label{app:LIF_backprop}
With surrogate gradient method, the gradients are computed by BPTT, meaning backpropagating the loss $\mathcal{L}$ through all time steps. The gradients at \(l\)-th are calculates as:
\begin{equation}
\frac{\partial \mathcal{L}}{\partial \bm{W}^l} = \sum_{t=1}^T \frac{\partial \mathcal{L}}{\partial \bm{U}^l[t]} \cdot \frac{\partial \bm{U}^l[t]}{\partial \bm{W}^l}, l = L, L - 1, \cdots, 1,
\label{eq:derivative}
\end{equation}
According to the chain rule, when \( t = T\), we have
\begin{equation}
\frac{\partial \mathcal{L}}{\partial \bm{U}^{l}[T]} = \frac{\partial \mathcal{L}}{\partial \bm{S}^l[T]} \cdot \frac{\partial \bm{S}^l[T]}{\partial \bm{U}^{l}[T]},
\end{equation}
When \( t = T - 1, \ldots, 1\), this gradient can be decomposed into spatial and temporal components:
\begin{equation}
\frac{\partial \mathcal{L}}{\partial \bm{U}^l[t]} = 
\underbrace{\frac{\partial \mathcal{L}}{\partial \bm{S}^l[t]} \cdot \frac{\partial \bm{S}^l[t]}{\partial \bm{U}^l[t]}}_{\text{Spatial Term}} + 
\underbrace{\frac{\partial \mathcal{L}}{\partial \bm{U}^l[t+1]} \cdot \frac{\partial \bm{U}^l[t+1]}{\partial \bm{U}^l[t]}}_{\text{Temporal Term}},
\label{eq:gradient_decompose}
\end{equation}
And it can be further derived recursively as follows:
\begin{equation}
\begin{aligned}
\frac{\partial \mathcal{L}}{\partial \bm{U}^{l}[t]} &= \frac{\partial \mathcal{L}}{\partial \bm{S}^l[t]} \frac{\partial \bm{S}^l[t]}{\partial \bm{U}^{l}[t]} + \frac{\partial \mathcal{L}}{\partial \bm{U}^{l}[t+1]} 
\overset{\text{decay rate } \lambda}{\overbrace{
\frac{\partial \bm{U}^{l}[t+1]}{\partial \bm{m}^{l}[t]} }}
\overset{\epsilon^l[t]}{\overbrace{(\frac{\partial \bm{m}^{l}[t]}{\partial \bm{U}^{l}[t]} + \frac{\partial \bm{m}^{l}[t]}{\partial \bm{S}^l[t]} \frac{\partial \bm{S}^l[t]}{\partial \bm{U}^{l}[t]})}}
\\
&= \frac{\partial \mathcal{L}}{\partial \bm{S}^l[t]} \frac{\partial \bm{S}^l[t]}{\partial \bm{U}^{l}[t]} + \left( \frac{\partial \mathcal{L}}{\partial \bm{S}^l[t+1]} \frac{\partial \bm{S}^l[t+1]}{\partial \bm{U}^{l}[t+1]} + \frac{\partial \mathcal{L}}{\partial \bm{U}^{l}[t+2]} \lambda \epsilon^l[t+1] \right) \lambda \epsilon^l[t] \\
&= \frac{\partial \mathcal{L}}{\partial \bm{S}^l[t]} \cdot \frac{\partial \bm{S}^l[t]}{\partial \bm{U}^{l}[t]} + \frac{\partial \mathcal{L}}{\partial \bm{S}^l[t+1]} \cdot \frac{\partial \bm{S}^l[t+1]}{\partial \bm{U}^{l}[t+1]} \cdot \lambda \epsilon^l[t] + \frac{\partial \mathcal{L}}{\partial \bm{U}^{l}[t+2]} \cdot \lambda \epsilon^l[t+1] \cdot \lambda \epsilon^l[t] \\
&= \dots \\
&= \frac{\partial \mathcal{L}}{\partial \bm{S}^l[t]} \cdot \frac{\partial \bm{S}^l[t]}{\partial \bm{U}^{l}[t]} + \sum_{t' = t+1}^{T} \left( \frac{\partial \mathcal{L}}{\partial \bm{S}^l[t']} \cdot \frac{\partial \bm{S}^l[t']}{\partial \bm{U}^{l}[t']} \cdot \prod_{t'' = t+1}^{t'} \lambda \epsilon^l[t''-1] \right ) \\
&= \sum_{t' = t}^{T} \left( \frac{\partial \mathcal{L}}{\partial \bm{S}^l[t']} \cdot \frac{\partial \bm{S}^l[t']}{\partial \bm{U}^l[t]} \cdot \prod_{t'' = t+1}^{t'} \lambda \epsilon^l[t''-1] \right),
\label{lif_gradient_analysis_in_appendix}
\end{aligned}
\end{equation}
The gradient \(\frac{\partial \mathcal{L}}{\partial \bm{S}^l[t]} \) varies depending on the layer, and can be expressed as:
\begin{equation}
    \frac{\partial\mathcal{L}}{\partial\bm{S}^l[t]} =
    \begin{cases} 
        \frac{\partial\mathcal{L}}{\partial\bm{S}^L[t]}, & \text{if } l = \mathcal{L}, \\
        \frac{\partial\mathcal{L}}{\partial\bm{U}^{l+1}[t]} \frac{\partial\bm{U}^{l+1}[t]}{\partial\bm{S}^l[t]}, & \text{if } l = \mathcal{L}-1, \dots, 1.
    \end{cases}
\label{eq:partialL/partialS_app}
\end{equation}

\section{Analysis of \(\gamma\)'s Impact on Temporal Gradient Vanishing}
\label{app:gamma_temporal_impact}
Here, we analyze the impact of \(\gamma\) on temporal gradient vanishing to complete the proof of Lemma \ref{lemma_app: gradient vanishing and gamma} according to the analysis in \cite{huang2024clif}:

\setcounter{lemma}{1}
\begin{lemma}
    \label{lemma_app: gradient vanishing and gamma}
    The likelihood of experiencing gradient vanishing is inversely correlated with \(\gamma\).
\end{lemma}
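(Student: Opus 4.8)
The plan is to isolate the temporal component of the backpropagated gradient from the recursion in Eq.~\eqref{eq:7} (equivalently the appendix expansion \eqref{lif_gradient_analysis_in_appendix}) and show that its decay rate is a monotone function of $\gamma$. First I would fix a layer $l$ and a neuron and read off from Eq.~\eqref{eq:7} that the contribution of a future step $t'$ to $\partial\mathcal{L}/\partial\bm{U}^l[t]$ is carried by the product $\prod_{t''=t+1}^{t'}\lambda\,\epsilon^l[t''-1]$; thus ``gradient vanishing'' is precisely the statement that this product tends to $0$ as the temporal gap $t'-t$ grows. I would bound the magnitude of the temporal term by $(\lambda\,\bar\epsilon)^{\,t'-t}$ times the corresponding spatial term, where $\bar\epsilon$ is the effective value of $\epsilon^l$ accumulated along the backward path.

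Next I would compute $\epsilon^l[t]$ explicitly for the rectangular surrogate. Substituting Eq.~\eqref{eq:surrogate} into Eq.~\eqref{eq:11} gives $\epsilon^l[t] = 1 - V_{\text{th}}\,H'(\bm{U}^l[t]) \in \{\,1 - V_{\text{th}}/\gamma,\ 1\,\}$, the first value occurring exactly when $|\bm{U}^l[t]-V_{\text{th}}| < \gamma/2$. Restricting attention to the neurons actually at risk --- those whose membrane potential lies in the surrogate support at a positive fraction of time steps, which are also the only neurons receiving a nonzero spatial term --- the effective per-step decay factor along a path is $g(\gamma) := \lambda\,(1 - V_{\text{th}}/\gamma)$. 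On the relevant regime $\gamma \ge V_{\text{th}}$ I would observe that $g$ is continuous and strictly increasing in $\gamma$, with $g(\gamma)\to 0$ as $\gamma\downarrow V_{\text{th}}$ and $g(\gamma)\uparrow\lambda<1$ as $\gamma\to\infty$.

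The conclusion then follows: the temporal part of $\partial\mathcal{L}/\partial\bm{U}^l[t]$ over a gap of $k$ steps scales like $g(\gamma)^k$, so decreasing $\gamma$ shrinks the base $g(\gamma)$ and accelerates the geometric decay --- raising the likelihood of gradient vanishing --- while increasing $\gamma$ slows it; at the boundary $\gamma = V_{\text{th}}$ one has $\epsilon^l[t]=0$ at every in-window step, truncating the temporal gradient entirely after a single step. This is exactly the claimed inverse correlation, consistent with Figure~\ref{fig:subfig3}, where a larger $\gamma$ preserves a larger weight norm and hence a stronger surviving temporal signal.

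I expect the main obstacle to be making ``likelihood of gradient vanishing'' rigorous given that $\epsilon^l[t]$ is state-dependent: it equals $1 - V_{\text{th}}/\gamma$ only at steps where the potential sits inside the support window and equals $1$ elsewhere, so a single backward path mixes both values. I would handle this as in \cite{huang2024clif}, either by adopting a probabilistic model for the fraction of in-window steps and bounding the expected product, or by arguing conditionally on the ``active neuron'' event and using that out-of-window steps contribute zero spatial gradient and therefore cannot compensate for the decay accumulated on the in-window steps. The remaining steps --- differentiating $g(\gamma)$ and taking the two limits --- are routine.
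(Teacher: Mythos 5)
Your proposal follows essentially the same route as the paper's appendix proof: isolate the product $\prod \lambda\,\epsilon^l[\cdot]$ in the BPTT recursion, evaluate $\epsilon^l[t]=1-V_{\text{th}}/\gamma$ at in-window steps under the rectangular surrogate with $\gamma\ge V_{\text{th}}$, and note that this factor decreases monotonically toward zero as $\gamma\downarrow V_{\text{th}}$ (complete cutoff at $\gamma=V_{\text{th}}$), so smaller $\gamma$ accelerates temporal gradient decay. Your additional care about the state-dependence of $\epsilon^l[t]$ (out-of-window steps contributing a factor of $1$, not $0$) is sound and in fact corrects a small typo in the paper's case display, but it does not alter the core argument, which matches the paper's.
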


\begin{proof}

From Equation~\,\eqref{lif_gradient_analysis_in_appendix}, the gradient backpropagated from time \(t'\) to an earlier time \(t\) can be written as
\begin{equation}
    \frac{\partial \mathcal{L}}{\partial \bm{S}^l[t']} 
    \;\cdot\; \frac{\partial \bm{S}^l[t']}{\partial \bm{U}^{l}[t']} 
    \;\cdot\; \prod_{t''=t+1}^{t'} \lambda\,\epsilon^l[t''-1],
\end{equation}
where
\begin{equation}
    \epsilon^l[t] = 1 - V_{\text{th}}\,H'(x) \;=\;
    \begin{cases}
        1 - \frac{V_{\text{th}}}{\gamma}, & \text{if } \bigl|U^l[t] - V_{\text{th}}\bigr|\le \tfrac{\gamma}{2}, \\[6pt]
        0, & \text{if } \bigl|U^l[t] - V_{\text{th}}\bigr|> \tfrac{\gamma}{2}.
    \end{cases}
\end{equation}

The key observation is that \(\epsilon^l[t]\) can become very small or even zero, depending on \(\gamma\). To avoid spatial gradient explosion, prior work \cite{wu2019direct} often sets \(\gamma \ge V_{\text{th}}\). In this regime (\(\gamma > V_{\text{th}}\)), we have
\begin{equation}
    0 < \frac{V_{\text{th}}}{\gamma} < 1 
    \quad\Longrightarrow\quad
    0 < 1 - \frac{V_{\text{th}}}{\gamma} < 1.
\end{equation}
Hence as \(\gamma\) decreases from values \(\gamma > V_{\text{th}}\) down toward \(\gamma = V_{\text{th}}\), the quantity 
\(\bigl(1 - \tfrac{V_{\text{th}}}{\gamma}\bigr)\) shrinks toward zero. Because this factor appears repeatedly in the product \(\prod_{t''=t+1}^{t'} \epsilon^l[t''-1]\), it accelerates the convergence of gradients to zero (i.e., gradient vanishing) whenever a neuron is near threshold.

Furthermore, if one adopts the often-used default \(\gamma = V_{\text{th}} = 1\) \cite{comparison-TET}, then
\begin{equation}
    1 - \frac{V_{\text{th}}}{\gamma} = 1 - 1 = 0,
\end{equation}
which yields a complete cutoff. Specifically, whenever \(\bigl|U^l[t] - V_{\text{th}}\bigr| \le \tfrac{\gamma}{2}\), we have \(\epsilon^l[t] = 0\), causing all past gradients from times \(t' \le t\) to be nullified by the product.

Therefore, as \(\gamma\) approaches \(V_{\text{th}}\), the factor \(1 - \tfrac{V_{\text{th}}}{\gamma}\) becomes vanishingly small. This markedly increases the likelihood of gradient vanishing for near-threshold events—demonstrating that \textbf{reducing} the surrogate gradient support width \(\gamma\) \textbf{exacerbates} vanishing of the temporal gradient, especially at the critical point \(\gamma = V_{\text{th}}\). This completes the proof of Lemma \ref{lemma_app: gradient vanishing and gamma}.
\end{proof}

\section{Gradient Derivation Based on ILIF model}
\label{app:ILIF_derivation}
According to the chain rule and the propagation process illustrated in Figure~\ref{fig:neuron_subfig4} and \ref{fig:neuron_subfig5}, we can derive the following expression:

\begin{equation}
    \frac{\partial \mathcal{L}}{\partial \bm{U}^{l}[t]} = 
    \underbrace{\frac{\partial \mathcal{L}}{\partial \bm{S}^l[t]} \frac{\partial \bm{S}^l[t]}{\partial \bm{U}^{l}[t]}}_{\text{Spatial part}}
    + \underbrace{\frac{\partial \mathcal{L}}{\partial \bm{U}^{l}[t+1]} 
    \overset{\lambda}{\overbrace{
    \frac{\partial \bm{U}^{l}[t+1]}{\partial \bm{m}^{l}[t]} }}
    \overset{1}{\overbrace{
    \frac{\partial \bm{m}^{l}[t]}{\partial \bar{\bm{m}}^{l}[t]} }}
    \epsilon^l[t]}_{\text{Temporal part}} + 
    \overset{\text{define as } \phi^l[t]}{\overbrace{\underbrace{\frac{\partial \mathcal{L}}{\partial \mathbb{U}^{l}[t]}
    \xi^{l}[t] +
    \frac{\partial \mathcal{L}}{\partial \mathbb{I}^{l}[t]}
    \delta^{l}[t]}_{\text{Inhibitory additional part}}}}
    \label{eq:Gradient derivation of ILIF model}
\end{equation}

In this equation,
\begin{equation}
    \epsilon^l[t] \equiv 
    \frac{\partial \bar{\bm{m}}^{l}[t]}{\partial \bm{U}^l[t]} + \frac{\partial \bar{\bm{m}}^{l}[t]}{\partial \bm{S}^l[t]} \frac{\partial \bm{S}^l[t]}{\partial \bm{U}^{l}[t]}  = 1 - \frac{V_{\text{th}}}{\gamma}\geq 0
\end{equation}

\begin{equation}
    \xi^{l}[t] \equiv 
    \underbrace{\frac{\partial \mathbb{U}^l[t]}{\partial \bar{\bm{m}}^{l}[t]}}_{\geq 0}
    \underbrace{\epsilon^l[t]}_{\geq 0} +
    \underbrace{\frac{\partial \mathbb{U}^l[t]}{\partial \bm{S}^l[t]}}_{>0}
    \underbrace{\frac{\partial \bm{S}^l[t]}{\partial \bm{U}^l[t]}}_{>0} >0
\end{equation}

\begin{equation}
    \delta^{l}[t] \equiv 
    \underbrace{\frac{\partial \mathbb{I}^l[t]}{\partial \bm{S}^l[t]}}_{>0}
    \underbrace{\frac{\partial \bm{S}^l[t]}{\partial \bm{U}^l[t]}}_{>0} >0
\end{equation}

Next, we recursively expand the equation in a concise manner and examine the intermediate components in Appendix \ref{app:ILIF_gradient_analysis}.
\begin{equation}
\begin{aligned}
\frac{\partial \mathcal{L}}{\partial \bm{U}^{l}[t]} &= \frac{\partial \mathcal{L}}{\partial \bm{S}^l[t]} \frac{\partial \bm{S}^l[t]}{\partial \bm{U}^{l}[t]} + \frac{\partial \mathcal{L}}{\partial \bm{U}^{l}[t+1]} \lambda \epsilon^l[t] + \phi^l[t] \\
&= \frac{\partial \mathcal{L}}{\partial \bm{S}^l[t]} \frac{\partial \bm{S}^l[t]}{\partial \bm{U}^{l}[t]} + \left( \frac{\partial \mathcal{L}}{\partial \bm{S}^l[t+1]} \frac{\partial \bm{S}^l[t+1]}{\partial \bm{U}^{l}[t+1]} + \frac{\partial \mathcal{L}}{\partial \bm{U}^{l}[t+2]} \lambda \epsilon^l[t+1] + \phi^l[t+1] \right) \lambda \epsilon^l[t] + \phi^l[t] \\
&= \frac{\partial \mathcal{L}}{\partial \bm{S}^l[t]} \frac{\partial \bm{S}^l[t]}{\partial \bm{U}^{l}[t]} + \frac{\partial \mathcal{L}}{\partial \bm{S}^l[t+1]} \frac{\partial \bm{S}^l[t+1]}{\partial \bm{U}^{l}[t+1]} \lambda \epsilon^l[t] + \frac{\partial \mathcal{L}}{\partial \bm{U}^{l}[t+2]} \lambda \epsilon^l[t+1] \lambda \epsilon^l[t] + \phi^l[t+1] \lambda \epsilon^l[t] + \phi^l[t] \\
&= \frac{\partial \mathcal{L}}{\partial \bm{S}^l[t]} \frac{\partial \bm{S}^l[t]}{\partial \bm{U}^{l}[t]} + \frac{\partial \mathcal{L}}{\partial \bm{S}^l[t+1]} \frac{\partial \bm{S}^l[t+1]}{\partial \bm{U}^{l}[t+1]} \lambda \epsilon^l[t] + \frac{\partial \mathcal{L}}{\partial \bm{S}^l[t+2]} \frac{\partial \bm{S}^l[t+2]}{\partial \bm{U}^{l}[t+2]} \lambda \epsilon^l[t+1] \lambda \epsilon^l[t] \\
&\quad \quad+ \phi^l[t+2] \lambda \epsilon^l[t+1] \lambda \epsilon^l[t] + \phi^l[t+1] \lambda \epsilon^l[t] + \phi^l[t] \\
&= \dots \\
&= \textcolor{cyan!100}{
\frac{\partial \mathcal{L}}{\partial \bm{S}^l[t]} \frac{\partial \bm{S}^l[t]}{\partial \bm{U}^{l}[t]} + 
\sum_{t' = t+1}^{T} \left( \frac{\partial \mathcal{L}}{\partial \bm{S}^l[t']} \frac{\partial \bm{S}^l[t']}{\partial \bm{U}^{l}[t']} \prod_{t'' = t+1}^{t'} \lambda \epsilon^l[t''-1] \right)
} \textcolor{red!100}{ +
\sum_{t' = t+1}^{T} \left( \phi^l[t'] \prod_{t'' = t+1}^{t'} \lambda \epsilon^l[t''-1] \right)
+ \phi^l[t]
}
\end{aligned}
\end{equation}
where the blue part is the same as the computation in vanilla LIF model while the red part is the additional gradient introduced by the inhibitory units.

\section{Analysis of Additional Gradient Propagation of ILIF}
\label{app:ILIF_gradient_analysis}
The ILIF model introduces an inhibitory mechanism that influences gradient propagation through time. To understand its effect, we analyze the gradient contributions across different time steps.

When \(t=T\), the additional gradient is zero because the inhibition at the final time step occurs after the spike is generated, thus having no effect on the output.

When \(t=T-1\), the gradient contribution involves both the membrane potential and the input current, which can be expressed as:
\begin{equation}
    \frac{\partial \mathcal{L}}{\partial \mathbb{U}^{l}[T-1]} =
    \frac{\partial \mathcal{L}}{\partial \bm{S}^{l}[T]}
    \frac{\partial \bm{S}^{l}[T]}{\partial \bm{U}^{l}[T]}
    \frac{\partial \bm{U}^{l}[T]}{\partial \bm{m}^{l}[T-1]}
    \frac{\partial \bm{m}^{l}[T-1]}{\partial \mathbb{U}^{l}[T-1]}
\label{eq:membrane_t-1}
\end{equation}
and
\begin{equation}
    \frac{\partial \mathcal{L}}{\partial \mathbb{I}^{l}[T-1]} =
    \frac{\partial \mathcal{L}}{\partial \bm{S}^{l}[T]}
    \frac{\partial \bm{S}^{l}[T]}{\partial \bm{U}^{l}[T]}
    \frac{\partial \bm{U}^{l}[T]}{\partial \bm{I}^{l}[T]}
    \frac{\partial \bm{I}^{l}[T]}{\partial \mathbb{I}^{l}[T-1]}
\end{equation}

When \( t = T - 2, \ldots, 1\), we recursively expand the additional gradient parts. For the membrane potential, this expansion is written as:

\begin{equation}
    \frac{\partial \mathcal{L}}{\partial \mathbb{U}^{l}[t]} =
    \frac{\partial \mathcal{L}}{\partial \bm{S}^{l}[t+1]}
    \frac{\partial \bm{S}^{l}[t+1]}{\partial \bm{U}^{l}[t+1]}
    \frac{\partial \bm{U}^{l}[t+1]}{\partial \bm{m}^{l}[t]}
    \frac{\partial \bm{m}^{l}[t]}{\partial \mathbb{U}^{l}[t]} +
    \frac{\partial \mathcal{L}}{\partial \mathbb{U}^{l}[t+1]}
    \left( 
    \frac{\partial \mathbb{U}^{l}[t+1]}{\partial \mathbb{U}^{l}[t]} +
    \xi^{l}[t+1]
    \frac{\partial \bm{U}^{l}[t+1]}{\partial \bm{m}^{l}[t]}
    \frac{\partial \bm{m}^{l}[t]}{\partial \mathbb{U}^{l}[t]}
    \right)
\end{equation}

We further expand:
\begin{equation}
\begin{aligned}
    \frac{\partial \mathcal{L}}{\partial \mathbb{U}^{l}[t]} &= 
    \frac{\partial \mathcal{L}}{\partial \bm{S}^{l}[t+1]}
    \frac{\partial \bm{S}^{l}[t+1]}{\partial \bm{U}^{l}[t+1]} 
    \frac{\partial \bm{U}^{l}[t+1]}{\partial \bm{m}^{l}[t]} 
    \frac{\partial \bm{m}^{l}[t]}{\partial \mathbb{U}^{l}[t]} 
    + \frac{\partial \mathcal{L}}{\partial \mathbb{U}^{l}[t+1]} 
    \overset{\text{define as } \theta^l[t]}{\overbrace{\left( 
    \frac{\partial \mathbb{U}^{l}[t+1]}{\partial \mathbb{U}^{l}[t]} + 
    \xi^{l}[t+1] 
    \frac{\partial \bm{U}^{l}[t+1]}{\partial \bm{m}^{l}[t]} 
    \frac{\partial \bm{m}^{l}[t]}{\partial \mathbb{U}^{l}[t]} 
    \right)}} \\ 
    &= 
    \frac{\partial \mathcal{L}}{\partial \bm{S}^{l}[t+1]}
    \frac{\partial \bm{S}^{l}[t+1]}{\partial \bm{U}^{l}[t+1]} 
    \frac{\partial \bm{U}^{l}[t+1]}{\partial \bm{m}^{l}[t]} 
    \frac{\partial \bm{m}^{l}[t]}{\partial \mathbb{U}^{l}[t]} 
    + \left( \frac{\partial \mathcal{L}}{\partial \bm{S}^{l}[t+2]}
    \frac{\partial \bm{S}^{l}[t+2]}{\partial \bm{U}^{l}[t+2]} 
    \frac{\partial \bm{U}^{l}[t+2]}{\partial \bm{m}^{l}[t+1]} 
    \frac{\partial \bm{m}^{l}[t+1]}{\partial \mathbb{U}^{l}[t+1]} + 
    \frac{\partial \mathcal{L}}{\partial \mathbb{U}^{l}[t+2]} 
    \theta^l[t+1] \right) \theta^l[t] \\ 
    &= \frac{\partial \mathcal{L}}{\partial \bm{S}^{l}[t+1]}
    \frac{\partial \bm{S}^{l}[t+1]}{\partial \bm{U}^{l}[t+1]} 
    \frac{\partial \bm{U}^{l}[t+1]}{\partial \bm{m}^{l}[t]} 
    \frac{\partial \bm{m}^{l}[t]}{\partial \mathbb{U}^{l}[t]} + 
    \frac{\partial \mathcal{L}}{\partial \bm{S}^{l}[t+2]}
    \frac{\partial \bm{S}^{l}[t+2]}{\partial \bm{U}^{l}[t+2]} 
    \frac{\partial \bm{U}^{l}[t+2]}{\partial \bm{m}^{l}[t+1]} 
    \frac{\partial \bm{m}^{l}[t+1]}{\partial \mathbb{U}^{l}[t+1]} 
    \theta^l[t] \\ 
    &\quad + \frac{\partial \mathcal{L}}{\partial \bm{S}^{l}[t+3]}
    \frac{\partial \bm{S}^{l}[t+3]}{\partial \bm{U}^{l}[t+3]} 
    \frac{\partial \bm{U}^{l}[t+3]}{\partial \bm{m}^{l}[t+2]} 
    \frac{\partial \bm{m}^{l}[t+2]}{\partial \mathbb{U}^{l}[t+2]} 
    \theta^l[t+1] \theta^l[t] + \dots \\ 
    &= \frac{\partial \mathcal{L}}{\partial \bm{S}^{l}[t+1]}
    \frac{\partial \bm{S}^{l}[t+1]}{\partial \bm{U}^{l}[t+1]} 
    \frac{\partial \bm{U}^{l}[t+1]}{\partial \bm{m}^{l}[t]} 
    \frac{\partial \bm{m}^{l}[t]}{\partial \mathbb{U}^{l}[t]} 
    + \sum_{t' = t+1}^{T-2} 
    \frac{\partial \mathcal{L}}{\partial \bm{S}^{l}[t'+1]}
    \frac{\partial \bm{S}^{l}[t'+1]}{\partial \bm{U}^{l}[t'+1]} 
    \frac{\partial \bm{U}^{l}[t'+1]}{\partial \bm{m}^{l}[t']} 
    \frac{\partial \bm{m}^{l}[t']}{\partial \mathbb{U}^{l}[t']}
    \prod_{t'' = t+1}^{t'-1} \theta^l[t''] \\
    &= \sum_{t' = t}^{T-2} 
    \frac{\partial \mathcal{L}}{\partial \bm{S}^{l}[t'+1]}
    \frac{\partial \bm{S}^{l}[t'+1]}{\partial \bm{U}^{l}[t'+1]} 
    \frac{\partial \bm{U}^{l}[t'+1]}{\partial \bm{m}^{l}[t']} 
    \frac{\partial \bm{m}^{l}[t']}{\partial \mathbb{U}^{l}[t']}
    \prod_{t'' = t+1}^{t'-1} \theta^l[t'']
\label{eq:membrane_t-2}
\end{aligned}
\end{equation}

By combining Equation~\eqref{eq:membrane_t-1} and Equation~\eqref{eq:membrane_t-2}, we obtain the following expression for \( t = T - 1, \ldots, 1\)
\begin{equation}
\begin{aligned}
    \frac{\partial \mathcal{L}}{\partial \mathbb{U}^{l}[t]} &= 
    \frac{\partial \mathcal{L}}{\partial \bm{S}^{l}[T]}
    \frac{\partial \bm{S}^{l}[T]}{\partial \bm{U}^{l}[T]}
    \frac{\partial \bm{U}^{l}[T]}{\partial \bm{m}^{l}[T-1]}
    \frac{\partial \bm{m}^{l}[T-1]}{\partial \mathbb{U}^{l}[T-1]} +
    \sum_{t' = t}^{T-2} 
    \frac{\partial \mathcal{L}}{\partial \bm{S}^{l}[t'+1]}
    \frac{\partial \bm{S}^{l}[t'+1]}{\partial \bm{U}^{l}[t'+1]} 
    \frac{\partial \bm{U}^{l}[t'+1]}{\partial \bm{m}^{l}[t']} 
    \frac{\partial \bm{m}^{l}[t']}{\partial \mathbb{U}^{l}[t']}
    \prod_{t'' = t+1}^{t'-1} \theta^l[t''] \\
    &= \sum_{t' = t}^{T-1} 
\frac{\partial \mathcal{L}}{\partial \bm{S}^{l}[t'+1]}
\frac{\partial \bm{S}^{l}[t'+1]}{\partial \bm{U}^{l}[t'+1]} 
\frac{\partial \bm{U}^{l}[t'+1]}{\partial \bm{m}^{l}[t']} 
\frac{\partial \bm{m}^{l}[t']}{\partial \mathbb{U}^{l}[t']}
\prod_{t'' = t+1}^{t'-1} \theta^l[t'']
\end{aligned}
\end{equation}

To further analyze the effect on the gradient, we examine each component of the equation.

\begin{equation}
\begin{aligned}
    \frac{\partial \mathcal{L}}{\partial \mathbb{U}^{l}[t]} &= 
    \sum_{t' = t}^{T-1}
    \frac{\partial \mathcal{L}}{\partial \bm{S}^{l}[t]}
    \underbrace{\frac{\partial \bm{S}^{l}[t]}{\partial \bm{U}^{l}[t]}}_{>0}
    \underbrace{\frac{\partial \bm{U}^{l}[t'+1]}{\partial \bm{m}^{l}[t']}}_{>0}
    \underbrace{\frac{\partial \bm{m}^{l}[t']}{\partial \mathbb{U}^{l}[t']}}_{<0}
    \prod_{t'' = t+1}^{t'-1} \underbrace{\left( 
    \underbrace{\frac{\partial \mathbb{U}^{l}[t''+1]}{\partial \mathbb{U}^{l}[t'']}}_{=1} + 
    \underbrace{\xi^{l}[t''+1] 
    \frac{\partial \bm{U}^{l}[t''+1]}{\partial \bm{m}^{l}[t'']} 
    \frac{\partial \bm{m}^{l}[t'']}{\partial \mathbb{U}^{l}[t'']}}_{\in (-1,0)}
    \right)}_{>0}
\label{eq:membrane potential inhibition}
\end{aligned}
\end{equation}

According to the analysis in Lemma \ref{lemma: gamma and overactivation}, in the output layer, \(\frac{\partial \mathcal{L}}{\partial \bm{S}^L[t]} = \bm{Y}^L[t] - \hat{\bm{Y}}^L[t]\), where \(\bm{Y}^L[t]\) represents the average output (within [0,1]), and \(\hat{\bm{Y}}^L[t]\) represents the predicted values encoded in a one-hot format (taking values of either 0 or 1). Therefore,
\begin{equation}
    \frac{\partial \mathcal{L}}{\partial \mathbb{U}^{L}[t]} 
    \begin{cases}
        \geq 0, & \hat{\bm{Y}}[t'] = 1 \\ 
        \leq 0, & \hat{\bm{Y}}[t'] = 0 
    \end{cases}
\end{equation}

Similarly, the gradient associated with the input current can be expanded as:

\begin{equation}
    \frac{\partial \mathcal{L}}{\partial \mathbb{I}^{l}[t]} =
    \frac{\partial \mathcal{L}}{\partial \bm{S}^{l}[t+1]}
    \frac{\partial \bm{S}^{l}[t+1]}{\partial \bm{U}^{l}[t+1]}
    \frac{\partial \bm{U}^{l}[t+1]}{\partial \bm{I}^{l}[t+1]}
    \frac{\partial \bm{I}^{l}[t+1]}{\partial \mathbb{I}^{l}[t]} +
    \frac{\partial \mathcal{L}}{\partial \mathbb{I}^{l}[t+1]}
    \left( 
    \frac{\partial \mathbb{I}^{l}[t+1]}{\partial \mathbb{I}^{l}[t]} +
    \frac{\partial \mathbb{I}^{l}[t+1]}{\partial \bm{I}^{l}[t+1]}
    \frac{\partial \bm{I}^{l}[t+1]}{\partial \mathbb{I}^{l}[t]}
    \right)
\end{equation}

For the current additional derivative term, due to its rapid decay responsible for short-term adjustments and characterized by a very small coefficient, from \( t = T - 1, \ldots, 1\), the derivative can be approximated as:
\begin{equation}
    \frac{\partial \mathcal{L}}{\partial \mathbb{I}^{l}[t]} \approx
    \frac{\partial \mathcal{L}}{\partial \bm{S}^{l}[t+1]}
    \underbrace{\frac{\partial \bm{S}^{l}[t+1]}{\partial \bm{U}^{l}[t+1]}}_{>0}
    \underbrace{\frac{\partial \bm{U}^{l}[t+1]}{\partial \bm{I}^{l}[t+1]}}_{>0}
    \underbrace{\frac{\partial \bm{I}^{l}[t+1]}{\partial \mathbb{I}^{l}[t]}}_{<0}
\end{equation}

This follows the same sign pattern as the membrane potential derivative term:
\begin{equation}
    \frac{\partial \mathcal{L}}{\partial \mathbb{I}^{L}[t]} 
    \begin{cases}
        \geq 0, & \hat{\bm{Y}}[t'] = 1 \\ 
        \leq 0, & \hat{\bm{Y}}[t'] = 0 
    \end{cases}
\end{equation}

Similarly, based on Equation~\eqref{eq:partialL/partialS_app} and the chain rule, the sign of the additional gradient in the previous layers remains consistent with the weights, determined by the final output. Specifically, \(\frac{\partial \mathcal{L}}{\partial \mathbb{I}^{l}[t]}\) and \(\frac{\partial \mathcal{L}}{\partial \mathbb{U}^{l}[t]}\) are \(\geq 0\) if \(\hat{\bm{Y}}[t'] = 1\), and \(\leq 0\) if \(\hat{\bm{Y}}[t'] = 0\).

The gradient vanish problem is caused by the term \( \epsilon^l[t] \), which tends toward zero during calculations according to Lemma \ref{lemma: gamma and overactivation}. Here, we identify the additional gradient pathways in the expanded equation that facilitate gradient propagation during backpropagation. The additional gradient can be expressed as follows:

\begin{equation}
\begin{aligned}
    &\frac{\partial \mathcal{L}}{\partial \mathbb{U}^{l}[t]}
    \xi^{l}[t] +
    \frac{\partial \mathcal{L}}{\partial \mathbb{I}^{l}[t]}
    \delta^{l}[t] 
    = \frac{\partial \mathcal{L}}{\partial \mathbb{U}^{l}[t]}
    \left(
    \frac{\partial \mathbb{U}^l[t]}{\partial \bar{\bm{m}}^{l}[t]}
    \epsilon^l[t] +
    \frac{\partial \mathbb{U}^l[t]}{\partial \bm{S}^l[t]}
    \frac{\partial \bm{S}^l[t]}{\partial \bm{U}^l[t]}
    \right) +
    \frac{\partial \mathcal{L}}{\partial \mathbb{I}^{l}[t]}
    (\frac{\partial \mathbb{I}^l[t]}{\partial \bm{S}^l[t]}
    \frac{\partial \bm{S}^l[t]}{\partial \bm{U}^l[t]}) \\
    &=
    \frac{\partial \mathcal{L}}{\partial \mathbb{U}^{l}[t]}
    \cdot \lambda_{\mathbb{U}} \cdot \bm{S}^l[t] \cdot \epsilon^l[t] +
    \frac{\partial \mathcal{L}}{\partial \mathbb{U}^{l}[t]}
    \cdot \lambda_{\mathbb{U}} \cdot \bar{\bm{m}}^l[t] +
    \frac{\partial \mathcal{L}}{\partial \mathbb{I}^{l}[t]}
    \cdot \lambda_{\mathbb{I}} \cdot \bm{I}^l[t-1] 
    \quad\begin{cases}
        \geq 0, & \hat{\bm{Y}}[t'] = 1 \\ 
        \leq 0, & \hat{\bm{Y}}[t'] = 0 
    \end{cases}
\end{aligned}
\end{equation}

\section{Pseudocode of ILIF mdoel}
\label{app:pseudocode}
We provide the pseudocode of the ILIF model’s spiking process in Algorithm \ref{alg:pseudocode}. The red components represent the added inhibitory mechanisms compared to the vanilla LIF model. The abbreviations "P.L." and "P.T." denote the previous layer and previous time step, respectively.

\begin{algorithm}
\caption{Main Fire Procedure for LIF Model}
\label{alg:pseudocode}
\textbf{Input:} Total Time Steps $T$; Decay Rates $\lambda$, $\lambda_{\mathbb{U}}$, $\lambda_{\mathbb{I}}$; Threshold $V_{\text{th}}$ \\
\textbf{Initialize:} $\bm{U}_0 \gets 0$, $\mathbb{U}_0 \gets 0$, $\bm{I}_0 \gets 0$, $\mathbb{I}_0 \gets 0$

\setcounter{algorithm}{0}
\begin{algorithmic}[1]
\FOR{$t = 1$ to $T$}
    \STATE $\bm{I} \gets \bm{S}^{P.L.} \cdot W \textcolor{red!100}{- \mathbb{I}^{P.T.}}$ \hfill // \textit{Generate Input Current with Feedback Inhibition}
    \STATE $\bm{U} \gets \lambda \cdot \bm{m}^{P.T.} + \bm{I}$ \hfill // \textit{Update Membrane Potential}
    \STATE $\bm{S} \gets \mathbb{H}(\bm{U} - V_{\text{th}})$ \hfill // \textit{Check Threshold and Fire}
    \STATE $\bar{\bm{m}} \gets \bm{U} - \bm{S} \cdot V_{\text{th}}$ \hfill // \textit{Reset}
    \STATE $\textcolor{red!100}{\mathbb{U} \gets \lambda_{\mathbb{U}} \cdot \mathbb{U}^{P.T.} + \bm{S} \cdot \bar{\bm{m}}}$ \hfill // \textit{Update Membrane Potential Inhibition Unit}
    \STATE $\bm{m} \gets \bar{\bm{m}} \textcolor{red!100}{- \bm{S} \cdot \sigma(\mathbb{U})}$ \hfill // \textit{Inhibitory Reset}
    \STATE $\textcolor{red!100}{\mathbb{I} \gets \lambda_{\mathbb{I}} \cdot \mathbb{I}^{P.T.} + \bm{S} \cdot \bm{I}}$ \hfill // \textit{Update Current Inhibition Unit}
    \STATE \textbf{Output:} $\bm{S}$
\ENDFOR
\end{algorithmic}
\end{algorithm}

\section{Detailed Experimental Settings and Dateset Description}
\label{app:experiment_setting}
We utilize rectangle surrogate functions with \( \lambda = 1 \) and \(\tau = 1.1\), ensuring consistent random seed values of 1234. No data augmentation is applied across any of the datasets. All experiments are conducted on an NVIDIA RTX 4090D GPU. Detailed descriptions of the dataset configurations and default experimental setups are provided as shown in Table~\ref{tab:hyperparameters}.

\begin{table}[H]
\caption{Hyperparameter settings for various datasets.}
\centering
\begin{tabular}{ccccccccc}
\toprule
\textbf{Dataset} & \textbf{BS} & \textbf{Epochs} & \textbf{LR} & \textbf{Optimizer} & \textbf{Weight Decay} & \textbf{Dropout} & $\mathbb{U}$ & $\mathbb{I}$ \\
\midrule
CIFAR10      & 128 & 200 & 0.1  & SGD & 5e-05 & 0.1 & 1 & 0.03 \\
CIFAR100     & 128 & 200 & 0.1  & SGD & 5e-04 & 0.1 & 1 & 0.03 \\
DVSCIFAR10   & 128 & 200 & 0.05 & SGD & 5e-04 & 0.3 & 1 & 0.03 \\
DVSGesture   & 16  & 200 & 0.1  & SGD & 5e-04 & 0.4 & 1 & 0.05 \\
\bottomrule
\end{tabular}
\label{tab:hyperparameters}
\end{table}

\textbf{CIFAR10 and CIFAR100} are benchmark datasets for image classification. CIFAR10 consists of 10 classes with 60,000 32 \(\times\) 32 color images, divided into 50,000 training and 10,000 testing samples. CIFAR100 extends this to 100 classes grouped into 20 superclasses, with the same total number of images but fewer per class, making it more challenging. Direct encoding is utilized to convert image pixels into time series, with pixel values repeatedly fed into the input layer at each timestep.

\textbf{DVSCIFAR10} is an event-based dataset derived from CIFAR10 using a Dynamic Vision Sensor (DVS). It represents images as asynchronous event streams with spatial (x, y), polarity, and timestamp information. The dataset comprises 10,000 event-based images, where the pixel resolution has been expanded to 128×128. 

\textbf{DVSGesture} is also an event-based dataset comprising 11 hand gesture categories, performed by 29 participants under three different lighting conditions. It includes a total of 1,464 samples, with 1,176 samples in the training set and 288 samples in the test set. Each sample has a fixed resolution of 128x128 pixels.

\section{Continuous Firing Rate}
\label{app:continous_firing_rate}
This is a comparison of the continuous firing rates across different datasets and layers between the LIF and ILIF models. The ILIF model consistently exhibits lower continuous firing rates across almost all layers compared to the LIF model, demonstrating its inhibitory effect. Simultaneously, the ILIF model demonstrates lower average firing rates than the LIF model, with DVSGesture showing a significant reduction of over 30\%, as illustrated in Figure~\ref{fig:overall_continous_fr}.

\begin{figure}[htbp]
    \centering
    \begin{subfigure}{0.24\textwidth} 
        \includegraphics[width=\textwidth]{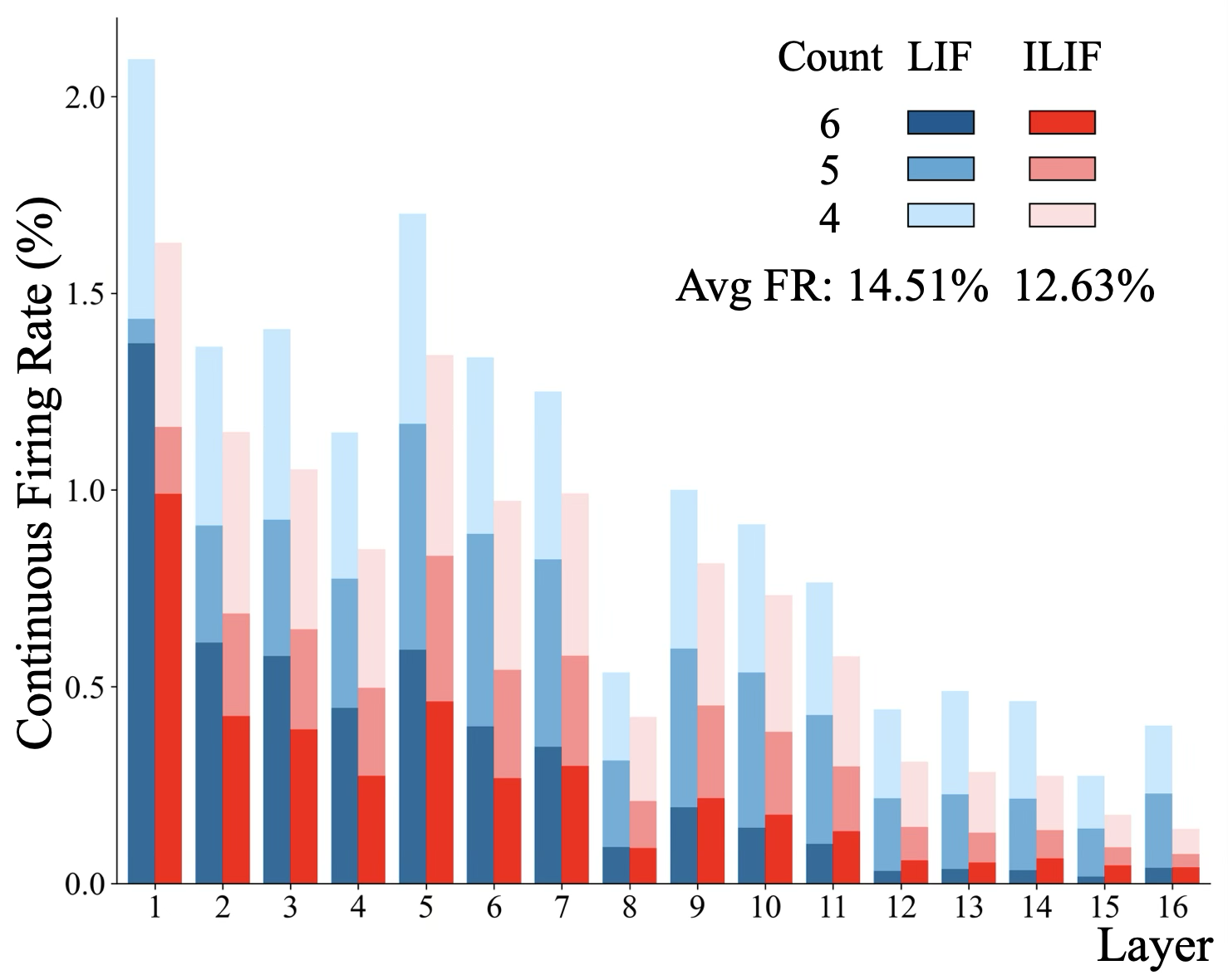} 
        \caption{CIFAR10} 
        \label{fig:sub1}
    \end{subfigure}
    \begin{subfigure}{0.24\textwidth}
        \includegraphics[width=\textwidth]{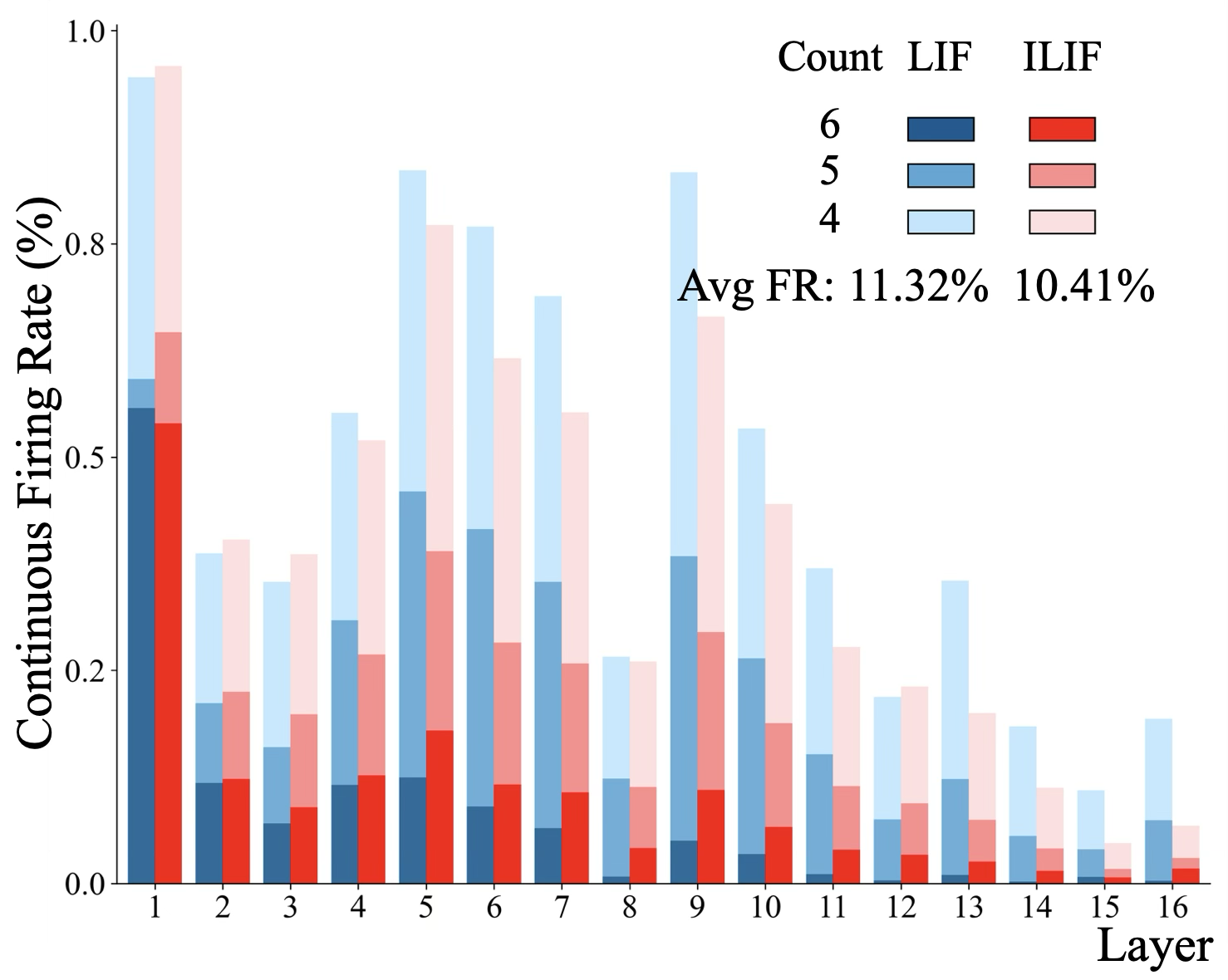}
        \caption{CIFAR100}
        \label{fig:sub2}
    \end{subfigure}
    \begin{subfigure}{0.24\textwidth}
        \includegraphics[width=\textwidth]{figures/continous_fire/continous_fire_dvscifar10.png}
        \caption{DVSCIFAR10}
        \label{fig:sub3}
    \end{subfigure}
    \begin{subfigure}{0.24\textwidth}
        \includegraphics[width=\textwidth]{figures/continous_fire/continous_fire_dvsges.png}
        \caption{DVSGesture}
        \label{fig:sub4}
    \end{subfigure}
    
    \caption{Continuous firing rate comparison across different datasets} 
    \label{fig:overall_continous_fr}
\end{figure}

\section{Energy Consumption}

To assess the energy efficiency of SNN, we adopt a standard approach from neuromorphic computing that estimates total synaptic operation power (SOP) based on the number of fundamental operations and their associated energy costs \cite{zhou2022spikformer}. Specifically, SOP is calculated as:

\begin{equation}
\text{SOP}_s = E_{\text{AC}} \cdot AC_s + E_{\text{MAC}} \cdot MAC_s
\end{equation}

where $E_{\text{AC}}$ and $E_{\text{MAC}}$ denote the energy consumption per accumulation (AC) and multiply-accumulate (MAC) operation, respectively. Following the energy model in \cite{han2015learning}, we assume each 32-bit floating-point addition consumes 0.9 picojoules (pJ), and each MAC operation consumes 4.6 pJ.

In SNN, neurons transmit binary spike signals, $s_i^l[t] \in \{0,1\}$, indicating whether neuron $i$ in layer $l$ fires at time step $t$. A spike activates all its outgoing synapses, each performing an addition. If a neuron has $f_i^l$ outgoing connections (fan-out), the total number of AC operations across the network is given by:

\begin{equation}
AC_s = \sum_{t=1}^{T} \sum_{l=1}^{L-1} \sum_{i=1}^{N^l} f_i^l \cdot s_i^l[t]
\end{equation}

Here, $T$ is the total number of time steps, $L$ is the number of layers, and $N^l$ is the number of neurons in layer $l$.

By contrast, ANN compute statically without temporal dynamics. Each neuron performs a single forward pass involving a fixed number of MAC operations, determined solely by its synaptic connections:

\begin{equation}
MAC_s = \sum_{l=1}^{L-1} \sum_{i=1}^{N^l} f_i^l
\end{equation}

Combining these operation counts with their energy costs yields the total SOP for any given network configuration.
As shown in Table~\ref{tab:neuron_energy_comparison}, SNNs demonstrate a clear advantage in energy efficiency over ANNs. Compared to LIF, ILIF incurs more MAC operations due to the inclusion of the inhibitory unit. However, its significantly lower spike rate not only offsets this overhead but leads to an overall reduction in energy consumption. On the DVS-Gesture dataset, ILIF shows slightly higher energy usage than LIF, attributed to the dataset’s higher resolution and extended temporal length. Despite this, ILIF consistently achieves better representational accuracy with fewer spikes and lower energy cost overall. These results highlight the effectiveness of incorporating inhibitory mechanism in achieving energy-efficient yet expressive spiking models.

\begin{table}[ht]
\caption{Comparison of Synaptic Operations and Energy Consumption Across ANN, LIF, and ILIF}
\centering
\begin{tabular}{c c c c c c c}
\toprule
\textbf{Dataset} & \textbf{Network Architecture} & \textbf{Method} & \textbf{T} & \textbf{ACs (M)} & \textbf{MACs (M)} & \textbf{SOP Energy ($\mu$J)} \\
\midrule
\multirow{3}{*}{CIFAR10} 
  & \multirow{3}{*}{ResNet18} & ANN  & 1 & 0 & 549.13 & 2525.980 \\
  &                            & LIF & 6 & 568.26 & 3.34 & 526.806 \\
  &                            & ILIF  & 6 & 488.12 & 6.68 & 470.056 \\
\midrule
\multirow{3}{*}{CIFAR100} 
  & \multirow{3}{*}{ResNet18} & ANN  & 1 & 0 & 549.13 & 2525.980 \\
  &                            & LIF & 6 & 438.70 & 3.34 & 410.203 \\
  &                            & ILIF  & 6 & 409.11 & 6.68 & 398.951 \\
\midrule
\multirow{2}{*}{DVSCIFAR10} 
  & \multirow{2}{*}{VGG11} & LIF & 10 & 201.58 & 3.41 & 197.106 \\
  &                        & ILIF  & 10 & 157.40 & 6.82 & 173.031 \\
\midrule
\multirow{2}{*}{DVSGesture} 
  & \multirow{2}{*}{VGG11} & LIF & 20 & 820.05 & 48.50 & 961.133 \\
  &                        & ILIF  & 20 & 653.86 & 96.99 & 1034.647 \\
\bottomrule
\end{tabular}
\label{tab:neuron_energy_comparison}
\end{table}

\section{Loss Curve}
As shown in Figure~\ref{fig:loss_comparison}, the loss curves of the vanilla LIF and ILIF models indicate that the vanilla LIF model undergoes fluctuations during training—mainly due to excessive neuron activation and gradient loss across time—thereby converging more slowly and ending at a higher final loss. By contrast, the ILIF model exhibits a smoother, steadily declining loss curve, as its inhibitory units provide shortcuts for rapid gradient backpropagation and help suppress activated neurons. This design ultimately enables the ILIF model to converge to a much lower loss value and attain higher accuracy.

\begin{figure}[h]
    \centering
    \begin{subfigure}[b]{0.24\textwidth}
        \centering
        \includegraphics[width=\textwidth]{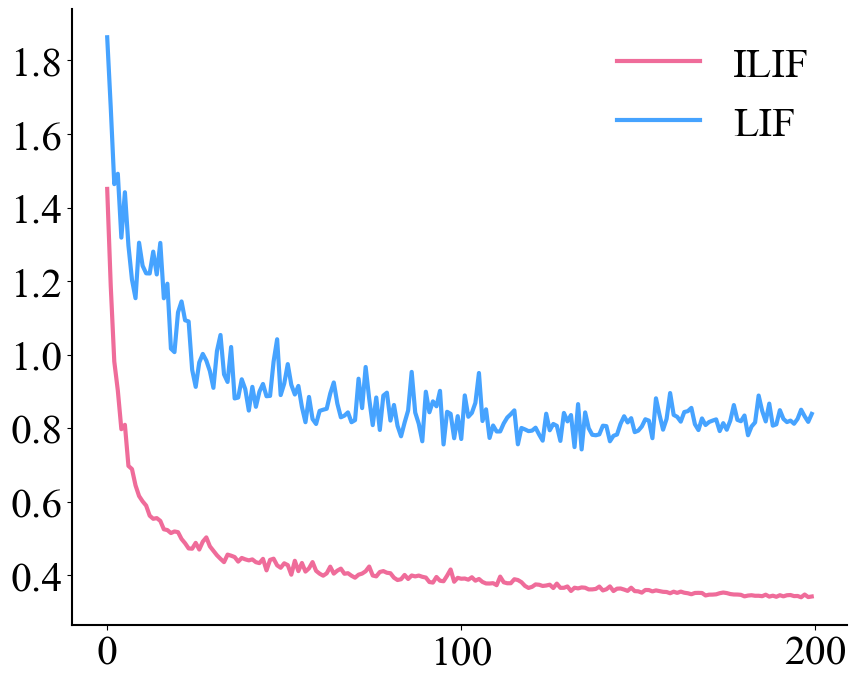}
        \vspace{-16pt}
        \caption{CIFAR10}
        \label{fig:cifar10_loss}
    \end{subfigure}
    \hfill
    \begin{subfigure}[b]{0.24\textwidth}
        \centering
        \includegraphics[width=\textwidth]{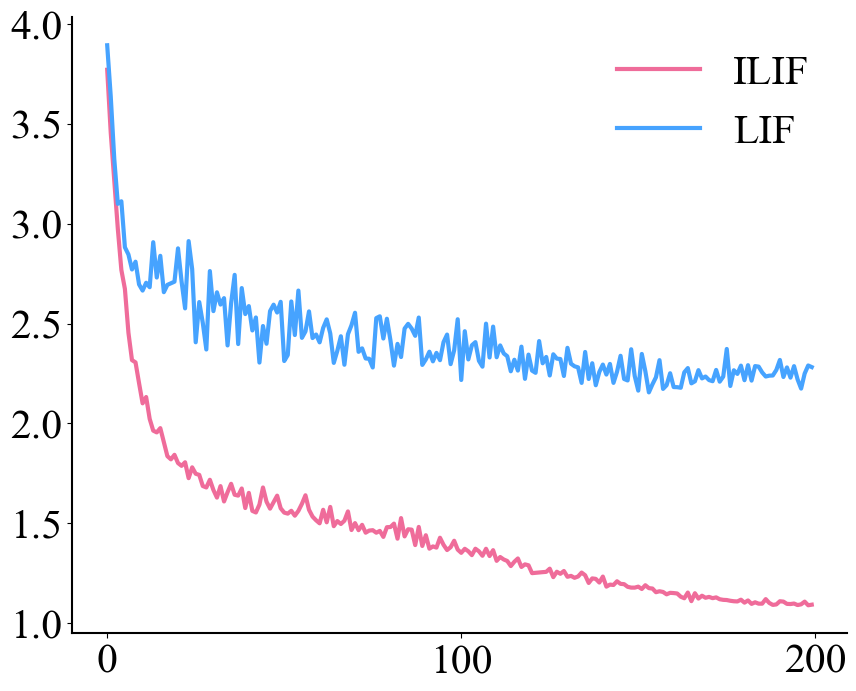}
        \vspace{-16pt}
        \caption{CIFAR100}
        \label{fig:cifar100_loss}
    \end{subfigure}
    \hfill
    \begin{subfigure}[b]{0.24\textwidth}
        \centering
        \includegraphics[width=\textwidth]{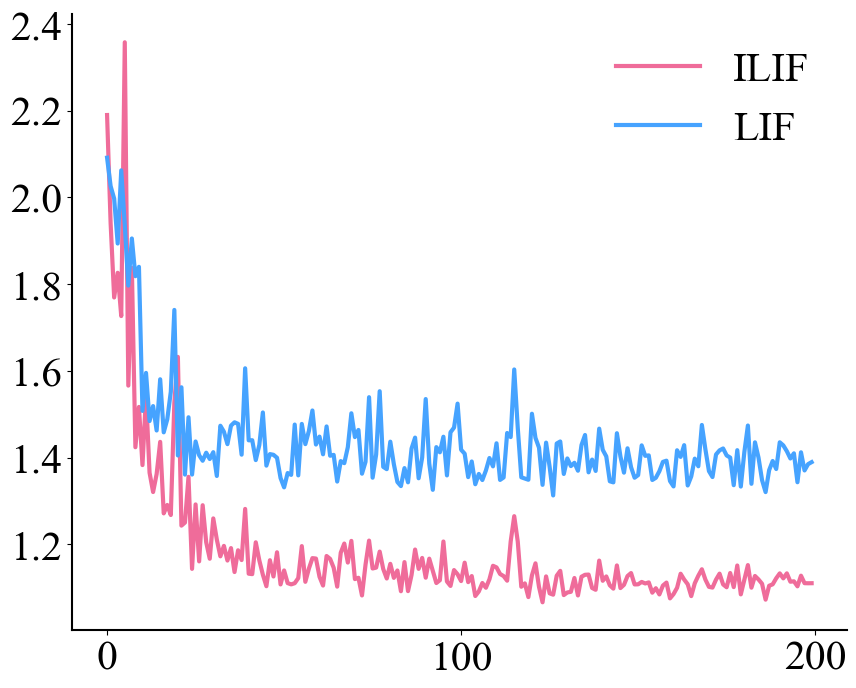}
        \vspace{-16pt}
        \caption{DVSCIFAR10}
        \label{fig:dvs_cifar10_loss}
    \end{subfigure}
    \hfill
    \begin{subfigure}[b]{0.24\textwidth}
        \centering
        \includegraphics[width=\textwidth]{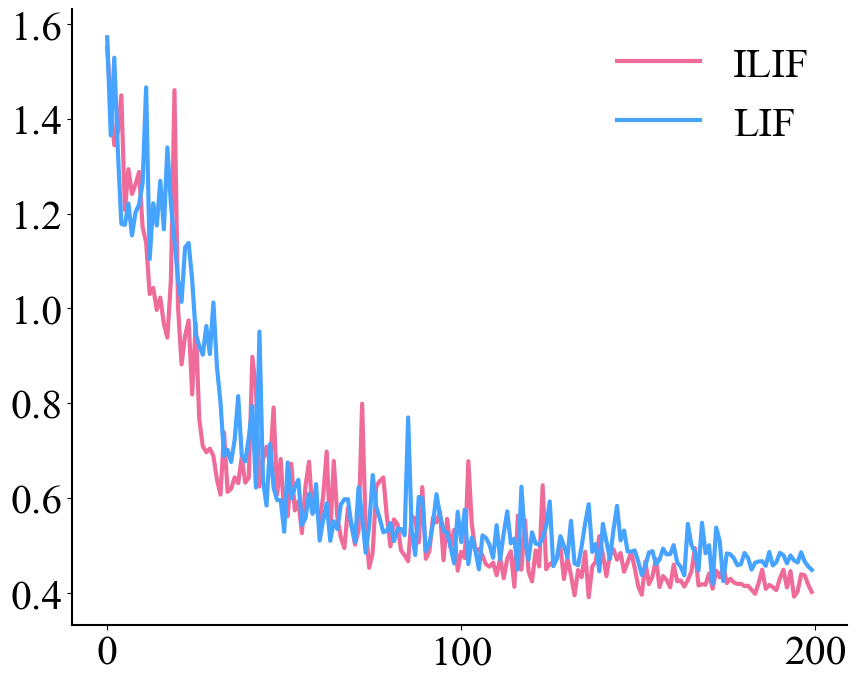}
        \vspace{-16pt}
        \caption{DVSGesture}
        \label{fig:dvs_gesture_loss}
    \end{subfigure}
    \caption{Loss curve comparison across different datasets}
    \label{fig:loss_comparison}
\end{figure}

\section{Comparison with SOTA}
\label{app:additional_experiment}
Due to space limitations, additional experiments with mainstream architectures are presented in this section, demonstrating the excellent performance of our method across various network architectures, as shown in Table~\ref{table:accuracy_app}. Models with higher accuracy achieved through data augmentation techniques are excluded from comparison. Notably, our method surpasses the traditional accuracy benchmarks on DVSCIFAR10 and DVSGesture datasets, achieving superior performance with only 200 iterations (compared to the conventional 300 iterations).

\begin{table*}[h]
    \centering
    \caption{Accuracy Comparison on CIFAR10, CIFAR100, DVSCIFAR10 and DVSGesture}
    \label{table:accuracy_app}
    \begin{threeparttable}
    \renewcommand{\arraystretch}{1.05} 
    \begin{tabular}{c| l c c c}
        \toprule
        \textbf{Dataset} & \textbf{Method} & \textbf{Network Architecture} & \textbf{Time Step} & \textbf{Accuracy (\%)} \\
        \midrule
        \multirow{12}{*}{CIFAR10} 
        & STBP-tdBN \cite{STBP-tdBN} & ResNet-19 & 4 / 6 & 92.92 / 93.16 \\
        & SEW-ResNet \cite{SEW-ResNet} & SEW-ResNet-18 & 4 & 91.22 / 94.39 \\
        & Dspike \cite{Dspike} & Modified ResNet-18 & 4 / 6 & 93.66 / 94.05 \\
        & TET \cite{TET} & ResNet-19 & 4 / 6 & 94.44 / 94.50 \\
        & SLTT* \cite{SLTT} & ResNet-18 & 6 & 94.44 \\
        & GLIF \cite{rw:glif} & ResNet-18 & 4 & 94.67 \\
        & SML \cite{SML} & ResNet-18 & 6 & 95.12 \\
        & CLIF \cite{huang2024clif} & ResNet-18 & 4 / 6 & 94.89 / 95.41 \\
        \noalign{\global\arrayrulewidth=0.2pt}
        & \textbf{Ours} & ResNet-18 & 4 / 6 & \textbf{95.24 / 95.49} \\
        \noalign{\global\arrayrulewidth=0.2pt}\cline{2-5}\noalign{\global\arrayrulewidth=0.5pt} 
        & Temporal Pruning \cite{temporal-pruning} & \multirow{3}{*}{VGG-16} & 5 & {93.44} \\
        & SLTT* \cite{SLTT} &  & 6 & {93.28} \\
        & \textbf{Ours} &  & 6 & \textbf{94.25} \\
        
        \midrule
        \multirow{9}{*}{CIFAR100} 
        & STBP-tdBN \cite{STBP-tdBN} & ResNet-19 & 4 / 6 & 70.86 / 71.12 \\
        & TET \cite{TET} & ResNet-19 & 4 / 6 & 74.74 / 74.72 \\
        & Dspike \cite{Dspike} & Modified ResNet-18 & 4 / 6 & 73.35 / 74.24 \\
        & SLTT* \cite{SLTT} & ResNet-18 & 6 & {74.38} \\
        & GLIF \cite{rw:glif} & ResNet-18 & 4 / 6 & 76.42 / 77.28 \\
        \noalign{\global\arrayrulewidth=0.2pt}
        & \textbf{Ours} & ResNet-18 & 4 / 6 & \textbf{77.43 / 78.51} \\
        \noalign{\global\arrayrulewidth=0.2pt}\cline{2-5}\noalign{\global\arrayrulewidth=0.5pt} 
        & Temporal Pruning \cite{temporal-pruning} & \multirow{3}{*}{VGG-16} & 5 & {71.58} \\
        & SLTT* \cite{SLTT} &  & 6 & {72.55} \\
        & \textbf{Ours} &  & 6 & \textbf{75.25} \\
        
        \midrule
        \multirow{9}{*}{DVSCIFAR10}
        & STBP-tdBN \cite{STBP-tdBN} & \multirow{4}{*}{ResNet-18} & 10 & 67.80 \\
        & Dspike \cite{Dspike} &  & 10 & 75.40 \\
        & InfLoR \cite{InfLoR} &  & 10 & 75.50 \\
        & \textbf{Ours} &  & 10 & \textbf{77.71} \\
        \noalign{\global\arrayrulewidth=0.2pt}\cline{2-5}\noalign{\global\arrayrulewidth=0.5pt}
        \noalign{\global\arrayrulewidth=0.2pt}\cline{2-5}\noalign{\global\arrayrulewidth=0.5pt}
        & OTTT* \cite{OTTT} & \multirow{4}{*}{VGG-11} & 10 & 76.27 \\
        & DSR \cite{DSR} &  & 20 & 77.27 \\
        & SLTT \cite{SLTT} &  & 10 & 77.17 \\
        & TET \cite{TET} &  & 10 & 77.33 \\
        & \textbf{Ours} &  & 10 & \textbf{78.60} \\
        
        \midrule
        \multirow{6}{*}{DVSGesture}
        & STBP-tdBN \cite{STBP-tdBN} & \multirow{2}{*}{ResNet-18} & 40 & 96.87 \\
        & \textbf{Ours} &  & 20 & \textbf{96.88} \\
        \noalign{\global\arrayrulewidth=0.2pt}\cline{2-5}\noalign{\global\arrayrulewidth=0.5pt}
        \noalign{\global\arrayrulewidth=0.2pt}\cline{2-5}\noalign{\global\arrayrulewidth=0.5pt}
        & OTTT \cite{OTTT} & \multirow{4}{*}{VGG-11} & 20 & 96.88 \\
        & SLTT \cite{SLTT} &  & 20 & 97.92 \\
        & CLIF \cite{huang2024clif} &  & 20 & 97.92 \\
        & \textbf{Ours} &  & 20 & \textbf{97.92} \\
        \bottomrule
    \end{tabular}
    \begin{tablenotes}
        \item[*] indicates that data augmentation was used.
    \end{tablenotes}
    \end{threeparttable}
\end{table*}

\twocolumn 
\section*{Ethical Statement}

There are no ethical issues.

\section*{Acknowledgments}
The work was supported in part by start-up funds with No. MSRI8001004 and No. MSRI9002005.

\bibliographystyle{named}
\bibliography{ijcai25}

\end{document}